\documentclass[10pt,twocolumn,letterpaper]{article}

\usepackage{cvpr}      %

\usepackage{times}
\usepackage{epsfig}
\usepackage{graphicx}
\usepackage{amsmath}
\usepackage{amssymb}
\usepackage{amsthm}
\usepackage{mathtools}
\usepackage{bm}
\makeatletter
\@namedef{ver@everyshi.sty}{}
\makeatother
\usepackage[mode=buildnew]{standalone}
\usepackage{tikz}
\usetikzlibrary{positioning}
\usetikzlibrary{shapes} 
\usetikzlibrary{matrix}
\usetikzlibrary{calc} 
\usetikzlibrary{intersections}
\usepackage[accsupp]{axessibility}  %
\usepackage{tikzscale}
\usepackage{pgfplots}
\usepackage{xcolor}
\usepackage[eulergreek]{sansmath}
\usepackage{caption,subcaption}
\usepackage{bbm}
\usepackage{comment}
\usepackage{acro}
\usepackage{listings}
\usepackage{multirow}
\usepackage{tabularx}
\usepackage{booktabs}
\usepackage{wrapfig}
\usepackage[ruled, vlined, linesnumbered]{algorithm2e}
\usepackage{float}

\usepackage{cuted}
\usepackage{pbox}

\usepackage[pagebackref=true,breaklinks=true,letterpaper=true,colorlinks,bookmarks=false]{hyperref}
\usepackage[capitalise]{cleveref}

\renewcommand{\paragraph}[1]{\textbf{#1}}

\usepackage{enumitem}
\setitemize{noitemsep,topsep=0pt,parsep=0pt,partopsep=0pt,leftmargin=1em}
\setenumerate{noitemsep,topsep=0pt,parsep=0pt,partopsep=0pt,leftmargin=1em}

\usepackage{xcolor}

\newcommand{\R}{\mathbb{R}}

\newcommand{\la}{\langle}
\newcommand{\ra}{\rangle}
\DeclarePairedDelimiter\abs{\lvert}{\rvert}%

\newtheorem{theorem}{Theorem}

\newtheorem{prop}[theorem]{Proposition}

\newtheorem{definition}[theorem]{Definition}

\setlength{\textfloatsep}{15pt} %
\DeclareAcronym{LP}{
	short = LP ,
	long = Linear Program,
	short-plural = s,
	long-plural = s ,
}

\DeclareAcronym{ILP}{
	short = ILP ,
	long = Integer Linear Program,
	short-plural = s,
	long-plural = s ,
}

\DeclareAcronym{BP}{
	short = BP ,
	long = Binary Integer Linear Program,
	short-plural =  s,
	long-plural = s ,
}
\DeclareAcronym{LDD}{
	short = LDD ,
	long = Lagrangian Dual Decomposition,
	short-plural = s ,
	long-plural = s ,
}
\DeclareAcronym{ICP}{
	short = ICP ,
	long = Iterative Closed Point,
	short-plural = s ,
	long-plural = s ,
}
\DeclareAcronym{BnB}{
	short = BnB ,
	long = Branch and Bound,
	short-plural = s ,
	long-plural = s ,
}
\DeclareAcronym{FM}{
	short = FM ,
	long = Functional Maps,
	short-plural =  ,
	long-plural =  ,
}
\DeclareAcronym{QAP}{
	short = QAP ,
	long = Quadratic Assignment Problem,
	short-plural = s,
	long-plural = s,
}
\begin{document}

\title{A Scalable Combinatorial Solver for Elastic \\Geometrically Consistent 3D Shape Matching}

\author{Paul Roetzer$^{1,2}$ $\qquad$ Paul Swoboda$^3$ $\qquad$ Daniel Cremers$^1$ $\qquad$ Florian Bernard$^2$\\
TU Munich$^1$ $\qquad$ University of Bonn$^2$ $\qquad$ MPI Informatics$^3$ 
}

\newcommand{\teaserheight}[0]{3.05cm}
\maketitle%
\begin{strip}%
  \centerline{%
  \footnotesize%
  \begin{tabular}{ccc}%
          \includegraphics[height=\teaserheight]{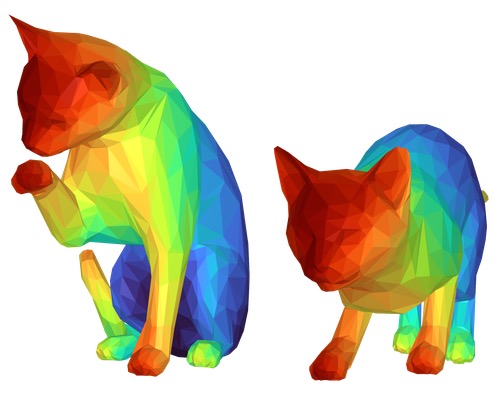} &
          \includegraphics[height=\teaserheight]{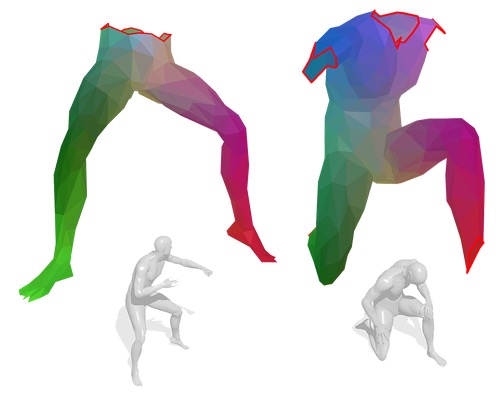}~\rotatebox{90}{\textcolor{gray}{\rule{3cm}{0.5pt}}}          \includegraphics[height=\teaserheight]{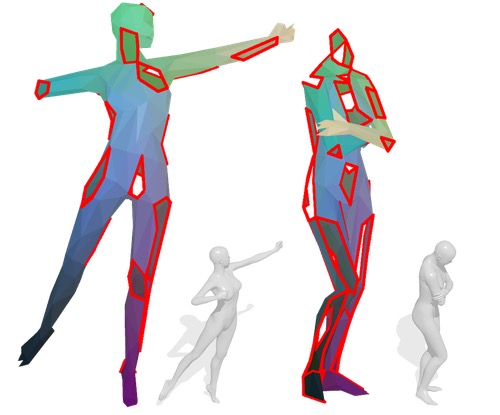} 
 & 
          \definecolor{mycolor1}{rgb}{0.00000,0.44706,0.74118}%
\definecolor{mycolor2}{rgb}{0.85098,0.32549,0.09804}%

\begin{tikzpicture}%
\pgfplotsset{%
    width=6cm,
    height=3.9cm,
    label style = {font=\scriptsize},
    legend style = {font=\scriptsize},
    tick label style = {font=\scriptsize},
    every axis label = {font=\scriptsize},
    legend image code/.code={
    \draw[mark repeat=2,mark phase=2]
    plot coordinates {
    (0cm,0cm)
    (0.15cm,0cm)        %
    (0.3cm,0cm)         %
    };%
    }
}
\begin{axis}[%
xmin=100,
xmax=550,
xtick={100, 200, 300, 400, 500, 600, 700},
xlabel={\# Faces},
xlabel style={yshift=0.9em},
ymin=0,
ymax=3600,
ytick={1800, 3600},
yticklabels={0.5, 1},
ylabel={Time [h]},
ylabel style={yshift=-2.3em},
legend style={at={(0.9,0.98)},anchor=north east,draw=none}
]
\addplot [smooth, color=mycolor1, mark repeat=0.5, line width=1.5pt]
  table[row sep=crcr]{%
  100 -50\\
  100 -100\\
};
\addplot [smooth, color=mycolor2, line width=1.5pt]
  table[row sep=crcr]{%
  100 -50\\
  100 -100\\
};
\addlegendentry{Windheuser~\cite{windheuser2011}}
\addlegendentry{Ours}

\addplot [smooth, color=mycolor1,dashed, dash pattern=on 8pt off 8pt, line width=2.0pt]
  table[row sep=crcr]{%
100	140\\
125	297\\
150	574\\
200	2000\\
225	3345\\
230 3600\\
};

\addplot [smooth, dashed, dash pattern=on 8pt off 8pt, color=mycolor2, line width=2.0pt]
  table[row sep=crcr]{%
100	28\\
125	32\\
150	56\\
175	51\\
200	103\\
225	171\\
250	167\\
275	135\\
300	251\\
325	255\\
350	305\\
375	554\\
400	478\\
425	474\\
450	1528\\
475	2177\\
500	1441\\
525	2308\\
550	2128\\
};

\addplot [smooth, dotted, color=mycolor1, line width=2.0pt]
  table[row sep=crcr]{%
100	100\\
125	389\\
150	546\\
175	937\\
200	2189\\
225	3314\\
230 3600 \\
};

\addplot [smooth, dotted, color=mycolor2, line width=2.0pt]
  table[row sep=crcr]{%
100	22\\
125	15\\
150	26\\
175	50\\
200	61\\
225	75\\
250	113\\
275	213\\
300	266\\
325	301\\
350	299\\
375	334\\
400	379\\
425	349\\
450	1132\\
475	1825\\
500	1678\\
525	3024\\
550	2004\\
};

\end{axis}

\end{tikzpicture}
          Orientation-preserving matching & Partial-to-partial shape matching (without using complete shape) 
          & \multicolumn{1}{c}{Runtime comparison to Windheuser et al.}
        \end{tabular}%
  }%


\captionof{figure}{We propose a \textbf{novel combinatorial solver for
the non-rigid matching of 3D shapes} based on discrete orientation-preserving diffeomorphisms~\cite{windheuser2011} (left). For the first time we utilize an orientation-preserving diffeomorphism to constrain the challenging problem of non-rigidly matching a pair of partial shapes without availability of complete shapes (center). Our solver scales significantly better compared to existing solvers and can thus handle shapes with practically relevant resolutions (right).
}
\label{fig:teaser}
\end{strip}

\begin{abstract}
We present a scalable combinatorial algorithm for globally optimizing over the space of geometrically consistent mappings between 3D shapes. We use the mathematically elegant formalism proposed by Windheuser et al.~\cite{windheuser2011} where 3D shape matching was formulated as an integer linear program over the space of orientation-preserving diffeomorphisms. Until now, the resulting formulation had limited practical applicability due to its complicated constraint structure and its large size. We propose a novel primal heuristic coupled with a Lagrange dual problem that is several orders of magnitudes faster compared to previous solvers. This allows us to handle shapes with substantially more triangles than previously solvable. We demonstrate compelling results on diverse datasets, and, even showcase that we can address the challenging setting of matching two partial shapes without availability of complete shapes. Our code is publicly available at \url{http://github.com/paul0noah/sm-comb}.
\end{abstract}

\section{Introduction}
The shape matching problem is widely studied in graphics and vision due to its high relevance in numerous applications, including 3D reconstruction, tracking, shape modeling, shape retrieval, interpolation, texture transfer, or the canonicalisation of geometric data for deep learning. Shape matching refers to finding correspondences between parts of shapes -- for example, the shapes may be represented as triangular surface meshes, and correspondences may be obtained between vertices or triangles of individual shapes.
While identifying such correspondences is a relatively easy task for humans, from a computational perspective shape matching is much more challenging. This is because many formulations lead to high-dimensional combinatorial optimization problems.
While some of them are efficiently solvable, \eg based on the linear assignment problem~\cite{Munkres:1957ju}, such simple approaches do not take geometric relations between the shape parts into account and thus
typically lead to poor matchings.
In contrast, 
most practically relevant formulations account for geometric consistency in some form, which in turn lead to significantly more difficult optimization problems (e.g.~the NP-hard quadratic assignment problem~\cite{Pardalos:1993uo}, or mixed-integer programming formulations~\cite{bernard2020}). 

A decade ago Windheuser et al.~\cite{windheuser2011,windheuser2011a}  proposed an elegant formalism for the deformable matching of 3D shapes. Most notably, their approach is able to account for geometric consistency based on an orientation-preserving discrete diffeomorphism. While this formulation is conceptually appealing, it requires to solve a difficult integer linear programming (ILP) problem. The authors proposed to relax the binary variables to continuous variables, so that a linear programming (LP) problem is obtained. Although this LP formulation is convex and can thus be solved to global optimality, the resulting optimization problem is prohibitively large, and in turn only shapes with low resolution (at most 250 triangles) can be matched with existing solvers. 

In this work we close this gap and propose a scalable combinatorial solver for geometrically consistent deformable 3D shape matching. %
Our main contributions are:
\begin{itemize}
    \item We propose a novel primal heuristic, which, together with a Lagrange dual problem, gives rise to a combinatorial solver for orientation-preserving deformable 3D shape matching based on the formalism by Windheuser et al.~~\cite{windheuser2011,windheuser2011a}.
    \item 
    We show that in the special case of consistent triangulations between both shapes our primal heuristic can readily be used to solve the ILP formulation %
    to global optimally in polynomial time.
    \item Our solver is orders of magnitude faster than previous solvers, which allows us to handle shapes with a substantially higher resolution.
    \item Experimentally we demonstrate state-of-the-art results on a range of different shape matching problems, and we showcase the applicability to the difficult case of partial-to-partial shape matching.
\end{itemize}

\section{Related Work}

In the following we give an overview of existing
shape matching approaches. For a more extensive overview we refer interested readers to the survey papers~\cite{vankaick2011, sahillioglu2020}.

\textbf{Rigid shape matching} methods consider the matching of shapes under the assumption that one shape undergoes a rigid-body transformation. %
For known correspondences, the resulting orthogonal Procrustes problem has a closed-form solution~\cite{schonemann1966}. In general, the assumption that correspondences are known is a severe limitation and typically one also wants to find the correspondences. Such tasks are commonly tackled via local optimization, \eg via the \ac{ICP} algorithm~\cite{besl1992} or variants thereof~\cite{estepar2004,billings2014, myronenko2010}.
 The main drawback of local methods is that they heavily depend on the initialization and generally do not obtain global optima. 
There are also global approaches, for example
based on semidefinite programming~\cite{maron2016}, \ac{BnB}~\cite{olsson2009}, or quasi-\ac{BnB}~\cite{dym2019}.
A major limitation of \emph{rigid} matching approaches is that in practice shapes are often non-rigidly deformed, for which rigid transformation models are too restricted.

For shapes that undergo an elastic transformation, \textbf{non-rigid shape matching} is more suitable.
One example %
is the functional map framework~\cite{ovsjanikov}, which
have led to remarkable results for isometric shapes~\cite{ovsjanikov, ren2019}. Yet, their main drawback is that they are generally sensitive to noise and less reliable in non-isometric settings. 
A wide variety of extensions of functional maps was developed to overcome specific drawbacks, e.g.~related to computation time~\cite{hu2021},  missing parts~\cite{rodola2017}, orientation preservation~\cite{ren2019}, deblurring~\cite{ezuz2017}, denoising~\cite{ren2019a} non-isometric shapes \cite{eisenberger2020}, or multi-matching~\cite{huang2020consistent,gao2021}.

An alternative non-rigid shape matching formulation is via the \textbf{\ac{QAP}}, also known as graph matching.
Due to the NP-hardness of the \ac{QAP}~\cite{Pardalos:1993uo}, numerous heuristic methods have been proposed~\cite{le-huu2017, leordeanu}. There also exist various convex relaxation methods for the \ac{QAP}~\cite{fogel2015,swoboda2017,dym2017,bernard2018,kushinsky2018}.
Although these methods are appealing as they can produce optimality bounds, they cannot guarantee to find global optima in all cases.
In \cite{bazaraa1979}, a globally optimal but exponential-time \ac{BnB} method was presented.
In \cite{vestner2017a}, the authors consider an iterative linearization of the \ac{QAP} for addressing shape matching. %
Recently,  simulated annealing was used for QAP-based shape matching~\cite{holzschuh2020}.
Similar to the \ac{QAP}, the ILP problem we address is NP-hard in general and theoretically solvable with (exponential-time) \ac{BnB} methods.

There are various alternative \textbf{local methods} for non-rigid shape matching, \eg based on Gromov-Hausdorff distances~\cite{memoli2005theoretical,bronstein2010gromov}, as-conformal-as-possible formulations~\cite{mandad2017}, iterative spectral upsampling~\cite{melzi2019}, a discrete solver for functional map energies~\cite{ren2021}, a hybrid spatial-spectral approach~\cite{xiang2021}, triangle-based deformation models~\cite{sumner2004},  elastic membrane energy optimization~\cite{ezuz2019}, and many more.
A major downside of local methods is that they heavily depend on a good initialization.

Opposed to local optimization approaches are \textbf{global methods}, which have the strong advantage that they are independent of the initialization.
Globally optimal non-rigid matchings can be obtained using shortest path algorithms for 2D shape matching~\cite{coughlan2000,felzenszwalb2005} and for 2D-to-3D matching~\cite{lahner2016}. 
Unfortunately, shortest path algorithms are not applicable for 3D-to-3D shape matching, since matchings cannot be represented as a shortest path but rather form a minimal surface~\cite{windheuser2011}. An alternative formulation based on a  convex relaxation was proposed in~\cite{chen2015}, which, however relies on an extrinsic term that requires a good spatial alignment.
In~\cite{bernard2020} a \ac{BnB} strategy is used to tackle a mixed-integer programming formulation that utilizes a  low-dimensional discrete matching model.
Despite the exponential worst-case complexity, it was shown that global optimality can be certified in most of the considered shape matching instances.
The framework by Windheuser et al.~\cite{windheuser2011, windheuser2011a, schmidt2014} considers an ILP formulation for the {orientation-preserving diffeomorphic matching} of 3D shapes. The main issue is that to date there is no efficient combinatorial solver that can solve large instances of respective problems, so that their framework is currently impracticable. The framework will be discussed in more detail in Sec.~\ref{sec:background}. The main objective of our work is to propose the first efficient solver tailored specifically to this formalism. %

\textbf{Learning-based shape matching}. A wide variety of learning-based approaches has been used to address  shape matching. While unsupervised approaches do not need time-consuming labeling of data~\cite{halimi2019,roufosse2019unsupervised,eisenberger2020a,donati2020,zeng2021,eisenberger2021}, supervised methods rely on the availability of labeled data~\cite{rodola2014,litany2017,charles2017,qi2017, groueix2019}. 
Overall, learning-based techniques are ideal to obtain task-specific features to solve shape matching problems, and it was demonstrated that respective approaches achieve remarkable results. However, a major difficulty is that they often lack the ability to generalize to other types of shapes, which in contrast is a major strength of learning-free approaches.
In this work we focus on a specific formalism in the latter class of optimization-based learning-free methods, and we believe that our work may be a first step towards integrating such powerful methods into modern learning approaches in order to eventually achieve the best from both worlds.

\section{Background of the Shape Matching ILP}
\label{sec:background}
In the following we recapitulate the shape matching integer linear program (ILP)  of Windheuser et al.~\cite{windheuser2011}.
We provide the used notation in Tab.~\ref{table:notation}.
\newcommand{\energyVar}{\mathbbm{E}}
\begin{table}[h!]
\small
	\begin{tabularx}{\columnwidth}{ll}%
		\toprule
		$X,Y$ & shapes (triangular surface meshes)\\
		$(V_X,E_X,F_X) $ &  vertices, edges and triangles of shape $X$ \\
		\multirow{2}{*}{$F$} & product space of vertex-triangle,  \\ & edge-triangle and triangle-triangle pairs\\
		$E$ & edge product space \\
		$\Gamma \in \{0,1\}^{\abs{F}}$ & indicator vector of matches \\
		$\pi_X, \pi_Y$ & product vector projection on $X$ and $Y$ \\
		$\partial$ & geometric consistency constraints \\
		$\energyVar$ &  matching energy \\
		\bottomrule
	\end{tabularx}
	\caption{Notation used in this paper.}
	\label{table:notation}
\end{table}

\begin{figure}
    \centering
    \includegraphics[width=0.96\columnwidth]{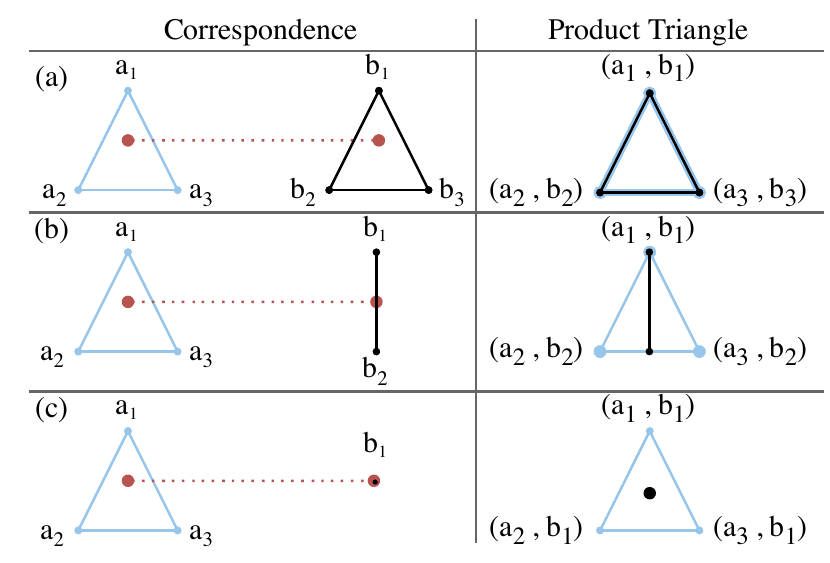}
    \caption{\textbf{Correspondences represented as product triangles} of $F$: If a product triangle (right) is part of the computed solution it implies that the respective triangle is matched to a triangle (top row), an edge (middle) or a vertex (bottom).  Each product triangle is associated with a local matching cost that encodes feature similarity and costs for stretching/compressing and bending.}
    \label{fig:correspondences}
\end{figure}

\begin{definition}[Shape]
We define a shape $X$ as a triplet $(V_X, E_X,F_X)$ of vertices $V_X$, 
edges 
$E_X \subset V_X \times V_X$ 
and triangles 
$F_X \subset V_X \times V_X \times V_X$,
such that the manifold induced by the triangles is oriented and has no boundaries\footnote{We note that partial shapes can be handled by closing holes and defining suitable costs for matching holes, see Appendix for more details.}.
\end{definition}
In the remainder we will consider two shapes $X = (V_X,E_X,F_X)$ and $Y=(V_Y,E_Y,F_Y)$ that shall be elastically matched to each other. A matching between $X$ and $Y$ is defined by  triangle-triangle, triangle-edge or triangle-vertex correspondences between pairs of elements in $X$ and $Y$.
We allow for triangle-edge and triangle-vertex matches to account for compressing and stretching shapes. For convenience, by $\overline{F}_\bullet$ we denote the set of \emph{degenerate} triangles, that in addition to the triangles $F_\bullet$ also contains triangles formed by edges (a triangle with two vertices at the same position) and triangles formed by vertices (a triangle with three vertices at the same position).
Similarly, we consider edge products and the set of degenerate edges $\overline{E}_\bullet$.
\begin{definition}[Product Spaces]
Let two shapes $X$ and $Y$ be given.
The triangle product space is defined as
\begin{equation}
	F \coloneqq \left\{ 
	\begin{pmatrix}
		a_1, b_1 \\
		a_2, b_2 \\ 
		a_3, b_3
	\end{pmatrix} \left|
	\begin{array}{ll}
	    (a_1 a_2 a_3 \in F_X \wedge b_1 b_2 b_3 \in \overline{F}_Y )~\vee \\
	     (a_1 a_2 a_3 \in \overline{F}_X \wedge b_1 b_2 b_3 \in {F}_Y ) 
	\end{array} \right.
	\right\}.\nonumber
\end{equation}
The edge product space is defined as
\begin{equation}
	E \coloneqq \left\{ 
	\begin{pmatrix}
		a_1, b_1 \\
		a_2, b_2
	\end{pmatrix} \left|
	\begin{array}{ll}
	    (a_1 a_2 \in E_X \wedge b_1 b_2 \in \overline{E}_Y )~\vee\\
	    (a_1 a_2 \in \overline{E}_X \wedge b_1 b_2 \in {E}_Y )
	\end{array} \right.
	\right\}.\nonumber
\end{equation}
\end{definition}
An illustration of possible correspondences represented as product triangles, \ie elements of the product triangle space, can be seen in Fig.~\ref{fig:correspondences}.
In order to guarantee a geometrically consistent matching, we impose two types of constraints:\\
(i) \textbf{Projection constraints $\pi$.}
We require that all triangles from $X$ must be matched to a vertex, edge or triangle from $Y$, and vice versa.
\begin{figure}[hbt!]
  \begin{minipage}[c]{0.3\columnwidth}
    \includegraphics[width=0.99\columnwidth]{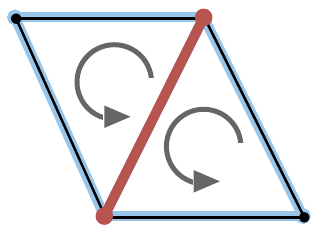}%
  \end{minipage}\hfill
  \begin{minipage}[c]{0.65\columnwidth}
    \caption{Two \textbf{product triangles} are neighboring if they share the same product edge with opposite orientation.}
    \label{fig:shared-product-edge}
  \end{minipage}
\end{figure} 
\\
\noindent(ii) \textbf{Geometric consistency constraints $\partial$.} Neighboring elements of $X$ must be matched to neighboring elements of $Y$.
Whenever a product edge is part of the matching, there must exist exactly two product triangles sharing the product edge oriented in opposite directions, see Fig.~\ref{fig:shared-product-edge}.
This ensures geometric consistency by requiring that the matching is a two-manifold in the product space, which also implies that the respective 3D mapping is orientation preserving.

Together with the energy $\energyVar \in \R^{\abs{F}}$  that quantifies the matching cost of individual correspondences (elements of the product space), the ILP shape matching problem reads
\begin{equation} 
    \underset{\Gamma \in \{0, 1\}^{|F|}}{\text{min}} \energyVar^\top  \Gamma  
    ~~  \text{s.t.} ~~
    \begin{pmatrix}
      \pi_X \\ \pi_Y \\ \partial
    \end{pmatrix}
    \Gamma
    = 
    \begin{pmatrix}
       \boldsymbol{1}_{|F_X|} \\ \boldsymbol{1}_{|F_Y|} \\ \boldsymbol{0}_{\abs{E}} \\
    \end{pmatrix},
\tag{ILP-SM}
\label{eq:opt-prob-discrete}
\end{equation}
where $\Gamma$ is the matching vector represented as indicator vector of the triangle product space $F$.
For more details about~\eqref{eq:opt-prob-discrete} we refer to \cite{windheuser2011, windheuser2011a, schmidt2014}.

\begin{figure*}[ht!]
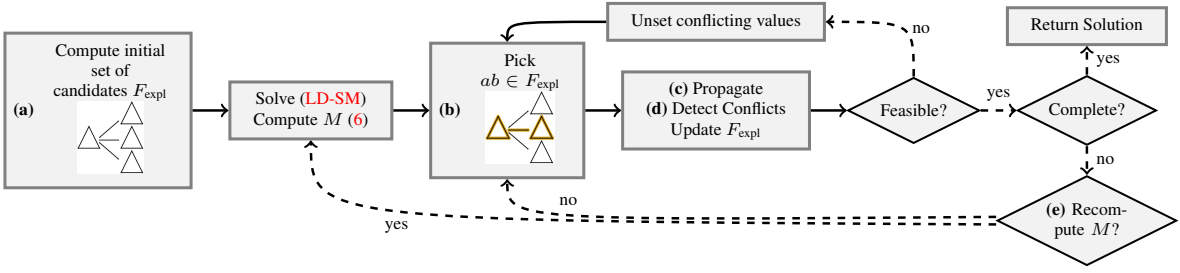

 	\centering
\hspace*{-2cm}
	     \includestandalone[width=\textwidth]{figures/solverPipeline}
		\caption{
		The \textbf{pipeline of our combinatorial solver} for the ILP shape matching problem~\eqref{eq:opt-prob-discrete}. The individual stages \textbf{(a)}-\textbf{(e)} are explained in Sec.~\ref{sec:primal-rounding}.
		}
		\label{fig:solver-pipeline}

\end{figure*}

\begin{prop}
If only triangle-triangle correspondences are allowed, problem~\eqref{eq:opt-prob-discrete} is solvable  to global optimality in polynomial time. 
\end{prop}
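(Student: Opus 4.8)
The plan is to exploit the fact that, once triangle-triangle correspondences are the only option, the feasible region of \eqref{eq:opt-prob-discrete} is extremely rigid: a whole feasible $\Gamma$ is determined by its restriction over a single triangle of $X$. Hence one can enumerate all feasible points in polynomial time and return the cheapest one. I assume $X$ is connected; if not, the argument is applied to each connected component of $X$ separately, which changes the running time only by a constant factor.

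\emph{Step 1 (the projection constraints pin down one product triangle per triangle of $X$).} In the triangle-triangle regime every element of $F$ lies over exactly one triangle of $X$ and one triangle of $Y$, so $\pi_X\Gamma=\boldsymbol 1$ forces, for each $f\in F_X$, exactly one selected product triangle over $f$. For a fixed $f$ there are at most $3\abs{F_Y}$ candidates: a choice of target triangle in $F_Y$ together with one of the three orientation-preserving vertex alignments (reflections flip orientation and are excluded). Fix once and for all a ``root'' triangle $f^0\in F_X$; then any feasible $\Gamma$ is obtained by choosing the selected product triangle over $f^0$ among these at most $3\abs{F_Y}$ options and propagating it, which is the content of the next step.

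\emph{Step 2 (geometric consistency propagates the choice uniquely).} I would establish a local uniqueness lemma: if $p$ is the selected product triangle over $f\in F_X$ and $f'\in F_X$ is the triangle sharing an edge $e_X$ with $f$, then the selected product triangle over $f'$ is uniquely determined by $p$. Indeed, $p$ carries a product edge $\hat e$ lying over $e_X$; the row of $\partial$ indexed by $\hat e$, together with $\partial\Gamma=\boldsymbol 0$, forces exactly one further selected product triangle to contain $\hat e$ with the opposite orientation. Its $X$-projection carries $e_X$ with reversed orientation, so---$X$ being a closed oriented mesh, in which the two triangles along an edge induce opposite orientations on it---it must lie over $f'$ (and not again over $f$, which already has a unique selected triangle). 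Its $Y$-projection carries, with reversed orientation, the edge of $Y$ onto which $\hat e$ projects; since the $Y$-side of $p$ is a \emph{genuine, non-degenerate} triangle, that edge has a unique other adjacent triangle in $F_Y$, which must then be the $Y$-side of the new product triangle, and the vertex alignment is forced by having to glue along $\hat e$. This is precisely where the triangle-triangle hypothesis enters: if the $Y$-side of $p$ were a degenerate triangle (an edge or a vertex), ``the neighbouring triangle of $Y$ across a given edge'' would not be well defined and uniqueness would fail. Traversing the dual triangle-adjacency graph of $X$ in breadth-first order starting from the root choice therefore determines the selected product triangle over every triangle of $X$; since $X$ is connected this determines $\Gamma$.

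\emph{Step 3 (feasibility check and optimisation).} For each of the at most $3\abs{F_Y}$ root choices I run this propagation. It may be inconsistent---a triangle of $X$ reached along two different paths may receive two different product triangles---in which case that root choice contributes no feasible solution. If it is consistent, then $\pi_X\Gamma=\boldsymbol 1$ holds by construction and $\partial\Gamma=\boldsymbol 0$ holds automatically: each edge of $X$ has exactly two incident triangles, whose assigned product triangles were forced to share the corresponding product edge with opposite orientations, and product edges used by no selected triangle contribute zero. Hence the only remaining condition is $\pi_Y\Gamma=\boldsymbol 1$, i.e.\ that the multiset of triangles of $Y$ produced by the propagation is exactly $F_Y$ with all multiplicities equal to one---a single linear scan. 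By Step 1 every feasible $\Gamma$ arises from some root choice, so after running all of them we output the feasible $\Gamma$ with smallest $\energyVar^\top\Gamma$, or report infeasibility if there is none. There are $O(\abs{F_Y})$ root choices, each processed by one graph traversal and one scan of the constraints, i.e.\ in time polynomial in $\abs{F_X}+\abs{F_Y}$, so the overall algorithm is polynomial. The main obstacle is the local uniqueness lemma of Step 2, and within it the careful bookkeeping of vertex orderings and orientations in the product space that makes the orientation-reversing partner of a product edge unique; everything else---the propagation, the automatic satisfaction of $\partial\Gamma=\boldsymbol 0$ and $\pi_X\Gamma=\boldsymbol 1$, the $\pi_Y$ check, and the final minimisation---is routine once that lemma is in place.
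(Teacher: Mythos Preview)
Your proposal is correct and follows essentially the same approach as the paper: fix one triangle-triangle match, propagate it uniquely via the geometric-consistency (two-manifold) constraints, and enumerate the polynomially many initial choices. The paper's own proof is a single-sentence sketch of precisely this idea; your write-up simply fills in the details (the local uniqueness lemma, the BFS propagation, and the explicit feasibility/cost check) that the paper leaves implicit.
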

\begin{proof}
The two-manifold property of the matching and of both shapes $X$ and $Y$ yields that a single triangle-triangle match determines  all other triangle-triangle matches due to geometric consistency.
\end{proof}
Despite the conceptual elegance of the formulation, as soon as degenerate matchings are allowed for (\eg triangle-edge or triangle-vertex), which is necessary for virtually any real-world shape matching instance,  problem~\eqref{eq:opt-prob-discrete} is significantly more difficult. Specifically, it belongs to the class of ILPs, which are NP-hard in general. Windheuser et al.~\cite{windheuser2011, windheuser2011a, schmidt2014} address this based on an LP relaxation that relaxes the binary variables to continuous ones. However, even their convex LP formulation has several drawbacks that impede a practical application:\\
    (i) The relaxed LP formulation involves a \emph{very large number of variables}. For example, non-rigidly matching two 3D shapes with 1000 faces each leads to a total of about $2\cdot 10^7$ variables.  The authors implement an efficient parallelized GPU-based primal-dual solver~\cite{eckstein1990alternating}, which can handle problems with up to 250 faces (leading to about $10^6$ binary variables), requiring  a total time of about 2\,h. The same applies to modern state-of-the-art LP solvers~\cite{gurobi}. Currently no solver exists that can solve even moderately-sized instances of problem~\eqref{eq:opt-prob-discrete}. \\
 (ii)  Windheuser et al.~attempt to address this based on a coarse-to-fine scheme. Initially, at the coarsest scale, they match severely downsampled shapes, and repeatedly apply their framework to refine the non-rigid matching only in the local neighborhood of the matching at the previous coarser scale. Overall, this has the downside that the final matching substantially depends on the initial matching at the coarsest scale. With that, there is the risk that the initial low resolution shapes do not contain sufficient details for finding a reliable matching. \\
(iii) Furthermore, solutions of the continuous relaxation are generally not  binary, so that a rounding step is necessary to obtain a discrete solution. To this end, the authors propose to repeatedly solve the expensive LP relaxations while gradually fixing more and more variables to be binary.

Overall, to make problem~\eqref{eq:opt-prob-discrete} practicable, being able to solve larger shape matching instances is of great importance.
To address this, we 
propose an efficient combinatorial solver based on the Lagrange decomposition to optimize problem~\eqref{eq:opt-prob-discrete}.

\section{Lagrange Decomposition}
 Next,  we introduce our Lagrange decomposition reformulation for problem~\eqref{eq:opt-prob-discrete}, which is amenable to dual optimization.

\paragraph{Decomposition into subproblems.}
We associate a small subproblem $\mathcal{S}$ for each row of the constraint matrix in~\eqref{eq:opt-prob-discrete}:

\begin{definition}[Subproblems]
For each individual projection and geometry consistency constraint we define a set of subproblems as
\begin{align}
\begin{multlined}[t][0.8\columnwidth]%
\forall j:~\mathcal{S}^{\pi_X}_j = \{\Gamma \in \{0,1\}^{\abs{F}} : \sum_{i} (\pi_X)_{ji} \Gamma_i = 1\} \,,
\end{multlined}
\\
\begin{multlined}[t][0.8\columnwidth]%
\forall j:~\mathcal{S}^{\pi_Y}_j = \{\Gamma \in \{0,1\}^{\abs{F}} : \sum_{i} (\pi_Y)_{ji} \Gamma_i = 1\}\,,
\end{multlined}
\\
\begin{multlined}[t][0.8\columnwidth]%
\forall j:~\mathcal{S}^{\partial}_j = \{ \Gamma \in \{0,1\}^{\abs{F}} : \sum_{i} (\partial)_{ji} \Gamma_i = 0 \}\,.
    \end{multlined}
\end{align}
We write the set of all subproblems as 
\begin{equation}
   \bm{\mathcal{S}} =  \{
   \mathcal{S}^{\pi_X}_{j=1,\ldots,\abs{F_X}},
   \mathcal{S}^{\pi_Y}_{j=1,\ldots,\abs{F_Y}},
   \mathcal{S}^{\partial}_{j=1,\ldots,\abs{E}}
   \}.
\end{equation}
\end{definition}
With the above decomposition we can write the Lagrange dual shape matching problem as
\begin{equation}
\tag{LD-SM}
\begin{array}{rl}
    \max\limits_{\boldsymbol{\lambda} = \{ \lambda^{\mathcal{S}}\}} & 
    \sum\limits_{\mathcal{S} \in \bm{\mathcal{S}}}
     ~\min\limits_{\Gamma \in \mathcal{S}} \la \lambda^{\mathcal{S}}, \Gamma \ra \\
    \text{s.t.} &
    \sum\limits_{\mathcal{S} \in \bm{\mathcal{S}}} \lambda^{\mathcal{S}} = \energyVar\,.
    \end{array}
    \label{eq:dual-shape-matching}
\end{equation}

\paragraph{Min-marginals.}
While we cannot easily derive a matching from a solution $\boldsymbol{\lambda}$ of the dual problem~\eqref{eq:dual-shape-matching}, we can nevertheless obtain important dual costs that will guide our primal solution search.
We use dual costs based on min-marginals, which are defined as follows:

\begin{definition}[Min-marginals]
For any subproblem $\mathcal{S} \in \bm{\mathcal{S}}$ we define the min-marginal for the $i$-th variable as the difference of the optima with the corresponding variable fixed to $1$ vs.\ $0$ as 
\begin{equation}
m^{\mathcal{S}}_{i} = 
\min_{\Gamma \in \mathcal{S}: \Gamma_i = 1} \la \lambda^\mathcal{S}, \Gamma \ra - 
\min_{\Gamma \in \mathcal{S}: \Gamma_i = 0} \la \lambda^\mathcal{S}, \Gamma \ra \,.
\end{equation}
In words, a min-marginal quantifies by how much a variable wants to attain the value $1$ resp.\ 0 in the subproblem $\mathcal{S}$.

The total min-marginal is defined as the sum over all min-marginals
\begin{equation}
\label{eq:total-min-marginals}
    M_i = \sum_{\mathcal{S} \in \bm{\mathcal{S}}} m^{\mathcal{S}}_i\,.
\end{equation}
\end{definition}

If for each variable all min-marginals have the same sign, and the total min-marginal is non-zero, we can directly reconstruct a primal solution by setting $\Gamma_i = 1$ if $M_i < 0$ $\Gamma_i = 0$ if $M_i > 0$.
This case occurs when the relaxation defined by the Lagrange decomposition~\eqref{eq:dual-shape-matching} is tight, which is not true in general.
If not tight, the above reconstruction will result in infeasible $\Gamma$.
However, in that case  good solutions will mostly agree with the sign of the total min-marginal, which we exploit in our primal rounding strategy.
We optimize the Lagrange decomposition and compute min-marginals with the approximate solver~\cite{lange2021}.

\begin{table*}[h!t!]
\footnotesize
	\begin{tabular}{c | c | c}
		\begin{minipage}[t]{.3\textwidth}
		\begin{center}\textbf{Injectivity}\end{center}%
		\vspace{-7mm}
			\begin{center}%
			\includegraphics[height=2.5cm]{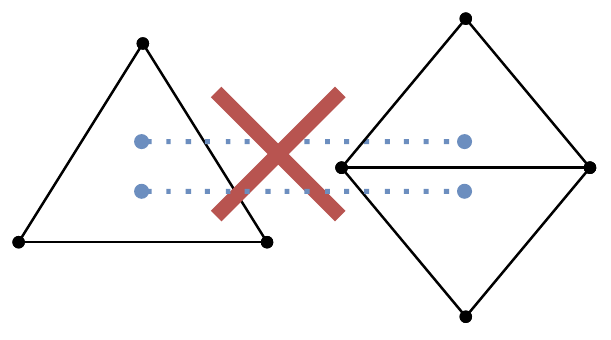}%
			\end{center}%
				\vspace{-4mm}
			 \textbf{Constraint:} Each face can only have one matching. \\
			 \textbf{Propagation:} 
			 Matching a triangle from one shape implies all other correspondences involving the same triangle have to be set to $0$.
		\end{minipage}
		&
		\begin{minipage}[t]{.3\textwidth}
		\begin{center}\textbf{Surjectivity}\end{center}
		\vspace{-7mm}
			\begin{center}
			\includegraphics[height=2.5cm]{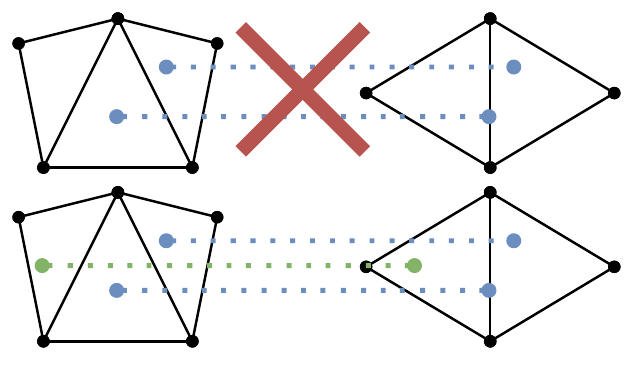}
			\end{center}
			\vspace{-4mm}
			\textbf{Constraint:} Each face of each shape needs to have a matching.\\
			\textbf{Propagation:}
			 If all correspondences involving a given triangle are $0$, then the remaining one is set to $1$.
		\end{minipage}
		&
		\begin{minipage}[t]{.3\textwidth}
		    \begin{center}\textbf{Two-manifold}\end{center}
		    \vspace{-7mm}
			\begin{center}
				\includegraphics[height=2.5cm]{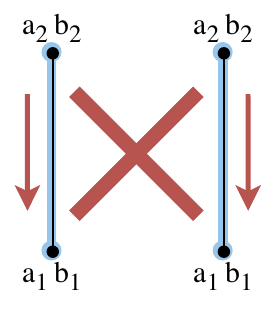}
			\end{center}
			\vspace{-4mm}
			\textbf{Constraint:} The same product edge orientation can only occur once
			in the solution\\
			\textbf{Propagation:} If a product edge is part of a matching, all other occurrences of the same product edge are set to $0$.\vspace{1em}
		\end{minipage}\\
		\hline
		\begin{minipage}[t]{.3\textwidth}
		\begin{center}\textbf{Geometric Consistency I}\end{center}
		\vspace{-7mm}
		\begin{center}
		\includegraphics[height=2.4cm]{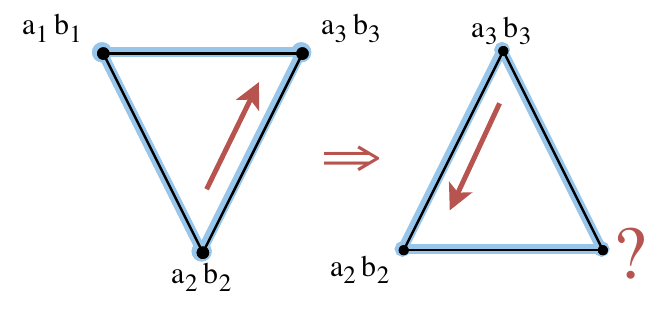}
		\end{center}
		\vspace{-4mm}
		\textbf{Constraint:} Each product edge needs to have a counterpart (opposite orientation). \\
		\textbf{Propagation:} A solution must include exactly one of the product triangles containing the opposite orientation of the product edge.
		\end{minipage}
		&
		\begin{minipage}[t]{.3\textwidth}
		\begin{center}\textbf{Geometric Consistency II}\end{center}
		\vspace{-7mm}
		\begin{center}
		\includegraphics[height=2.35cm]{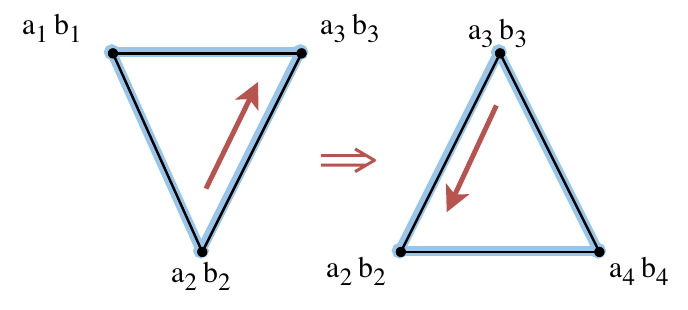}
		\end{center}
		\vspace{-4mm}
		\textbf{Constraint:}
		Each product edge needs to have a counterpart (opposite orientation). 
		\\
		\textbf{Propagation:} If there is only one counterpart left within the possible matches, it has to be part of the solution.
		\end{minipage}
		&
		\begin{minipage}[t]{.3\textwidth}
		    \begin{center}\textbf{Closedness}\end{center}
		    \vspace{-7mm}
			\begin{center}
			\includegraphics[height=2.4cm]{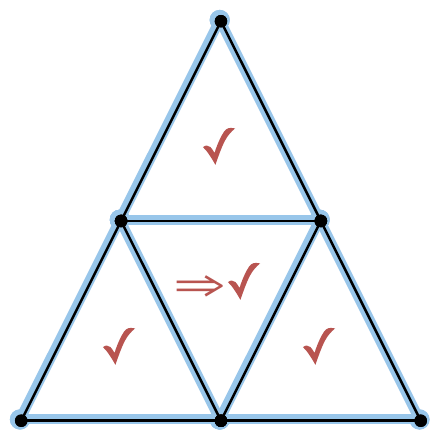}
			\end{center}
			\vspace{-4mm}
			\textbf{Constraint:} No holes are allowed in the matching. \\
			\textbf{Propagation:} Whenever all three neighbors of a product face are part of the solution, the product face itself is also part of the solution.
		\end{minipage}\\
	\end{tabular}
	\caption{\textbf{Matching constraints and the derived propagation rules} used in the primal heuristic. Product triangles are visualized with blue-black edges, triangles of 3D shapes with black edges.}
	\label{tab:propagation-rules}
\end{table*}

\section{Primal Rounding}
\label{sec:primal-rounding}
In the following we introduce our heuristic for primal rounding, i.e.\ obtaining primal solutions based on the dual costs that were computed through the Lagrange decomposition~\eqref{eq:dual-shape-matching}. Before we explain details, we provide a high-level summary of the main concept:
First, we pick a suitable initial product triangle as first correspondence candidate.
After we have solved the Lagrange decomposition and obtained the total min-marginals, we iteratively add product triangle matchings, i.e.~we select elements of the matching vector $\Gamma$ that are to be set to $1$. This may induce additional product triangle correspondences (due to the constraints in problem~\eqref{eq:opt-prob-discrete}), so we also force assignments of other variables. In case a conflict arises, we detect it and backtrack.
After a given number of variable assignments, or if too many backtracking steps occur, we solve the Lagrange decomposition~\eqref{eq:dual-shape-matching} again, while fixing the already found correspondences.
With the updated total min-marginals we start the search again until we find a complete solution. Our overall pipeline is shown in Fig.~\ref{fig:solver-pipeline}, which we explain next in detail.

\noindent\textbf{(a) Initialization.}
Given the empty matching, we choose the first triangle-triangle matching as follows:\\
(i) Consider the shape $Z \in \{X,Y\}$ with fewer triangles.\\
(ii) In $Z$, we choose the most regular triangle $z$ (all angles between $20^\circ$ and $90^\circ$, area is close to the mean area of all triangles, triangle lies in low curvature region).\\
(iii) We select all elements in $\Gamma$ that form non-degenerate triangle-triangle matchings of $z$. \\ %
(iv)  Among these, we choose the matching candidate with smallest total min-marginals.

\noindent\textbf{(b) Exploration of candidates.}
We maintain a set of individual triangle-triangle matchings,~i.e.~elements of the matching vector $\Gamma$, to explore in subsequent iterations $F_{\text{expl}} \subset F$.
After adding a product triangle $ab = (a_1 b_1, a_2 b_2, a_3 b_3) \in F_{\text{expl}}$ to our matching $\Gamma$, we add all other product triangles $a'b' \in F$ that share a product edge oriented in the opposite direction with the currently selected $ab$ to $F_{\text{expl}}$.
In other words, in the next iteration we select one of the tentative product triangles such that the \textbf{Geometric Consistency I} constraint (Tab.~\ref{tab:propagation-rules}) is fulfilled for one of the already selected product triangles.
We explore product triangles in $F_{\text{expl}}$ according to their total min-marginals in ascending order (see Appendix for a formal description).

Overall, the purpose of exploring only the elements of $F_\text{expl}$ minimizes the possibility of obtaining several disjoint submatchings that do not fit together geometrically.

\noindent\textbf{(c) Constraints \& propagation.}
After each individual variable assignment we analyze all constraints that involve the modified variable, and then check whether any other variable assignments are forced. 
If so, we set the forced variable and recursively propagate.
All used constraints and propagation rules are summarized shown in Tab.~\ref{tab:propagation-rules}.

\noindent\textbf{(d) Conflict detection.}
After each variable assignment we check for potential conflicts due to
two potential cases:\\
(i) \textit{Infeasibility}: individual constraints are not satisfiable anymore by the variables that are not set yet.\\
(ii) \textit{Contradicting assignments}: variables which are already set to one would have to be set to zero by the propagation and vice versa.

In cases of conflicts we perform backtracking by undoing the assignments and respective propagations that participated in the conflict.

\noindent\textbf{(e) Recomputation of min-marginals.}
In order to better reflect the quality of product triangle candidates in $F_{\text{expl}}$ w.r.t.\ already obtained partial matching, we regularly recompute (total) min-marginals. 
Whenever a certain amount of product triangles is set, we fix the respective variables in~\eqref{eq:opt-prob-discrete}, dualize the subproblem to obtain a new reduced Lagrange decomposition~\eqref{eq:dual-shape-matching}, and eventually optimize again to obtain the updated total min-marginals $M_i$.
We refer to the Appendix for details. %

\begin{figure*}[ht!]
	\begin{center}
		\includegraphics[width=\linewidth]{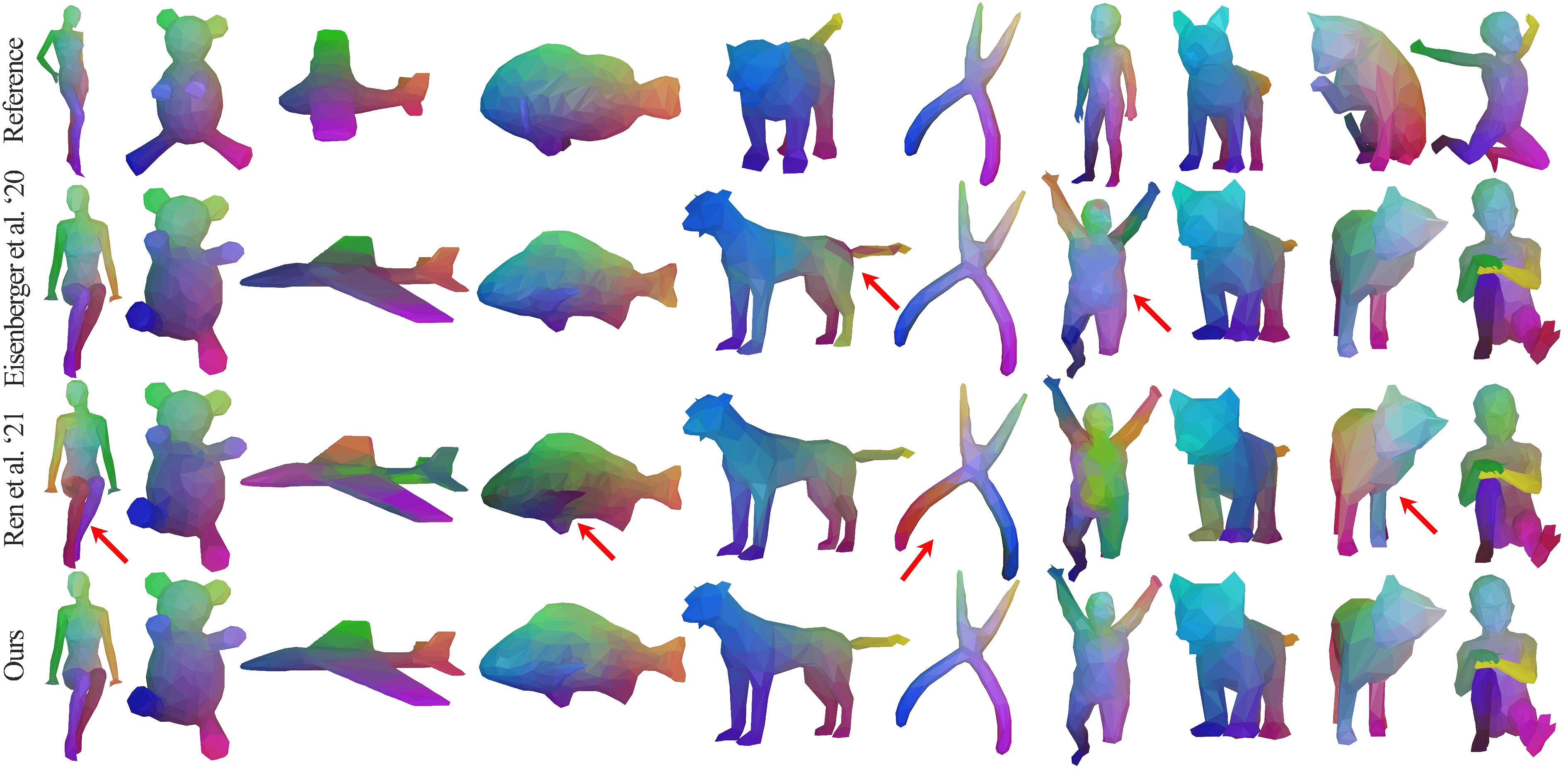}
		\caption{\textbf{Qualitative comparison} of the method by Eisenberger et al.~\cite{eisenberger2020} (second row), Ren et al.~\cite{ren2021} (third row) and Ours (last row). While Eisenberger et al.~and Ren et al.~do not guarantee orientation preservation they lead to erroneous matchings (\eg left-right flips, see red arrows), whereas our method leads to smooth and orientation-preserving matchings.
		}
		\label{fig:against-sota-qualitative}
	\end{center}
\end{figure*}

\begin{figure}[h!]
	\begin{center}
	\footnotesize
	\begin{tabular}{cc} \\
	\setlength\tabcolsep{0pt}
	     \hspace{-0.7cm}\definecolor{mycolor1}{rgb}{0.00000,0.44706,0.74118}%
\definecolor{mycolor2}{rgb}{0.85098,0.32549,0.09804}%

\newcommand{\subplotwidth}{0.24\columnwidth}
\newcommand{\subplotheight}{0.15\columnwidth}
\newcommand{\subplotlinewidth}{1.5pt}

\definecolor{mycolor1}{rgb}{0.00000,0.44706,0.74118}%
\definecolor{mycolor2}{rgb}{0.85098,0.32549,0.09804}%
\begin{tabular}{c}
\begin{tikzpicture}
\pgfplotsset{%
    width=1.3\textwidth,
    label style = {font=\footnotesize},
    legend style = {font=\footnotesize},
    tick label style = {font=\scriptsize},
    title style =  {font=\scriptsize\bfseries},
    every axis label = {font=\footnotesize},
    legend image code/.code={
    \draw[mark repeat=2,mark phase=2]
    plot coordinates {
    (0cm,0cm)
    (0.15cm,0cm)        %
    (0.3cm,0cm)         %
    };%
    }
}
\begin{axis}[%
legend columns=2,
name=cat,
width=3.5cm,
height=2cm,
scale only axis,
ylabel={$\%$ Correct},
xlabel={Geodesic Error},
xmin=0,
xmax=1,
ylabel near ticks,
xtick={  0, 0.5,   1},
ymin=0,
ymax=103,
ytick={  0,  50, 100},
axis x line*=bottom,
axis y line*=left,
legend style={at={(1,1.1)},anchor=south east}
]
\addplot [color=mycolor1, smooth, line width=\subplotlinewidth]
  table[row sep=crcr]{%
0	3.34453045129254\\
0.01	5.82736965094202\\
0.02	13.9550167956769\\
0.03	22.6449539944501\\
0.04	31.8095516284504\\
0.05	39.3310939097415\\
0.06	46.1735066452461\\
0.07	52.2637651526216\\
0.08	56.6452460931795\\
0.09	59.9459617350665\\
0.1	62.9545786475829\\
0.11	65.4447203154666\\
0.12	67.4382941434205\\
0.13	69.5779173360596\\
0.14	71.0238060464437\\
0.15	72.1045713451146\\
0.16	73.0611946838031\\
0.17	73.8133489119322\\
0.18	74.3829414342048\\
0.19	74.9963487658829\\
0.2	75.4271943917044\\
0.21	75.7265955893092\\
0.22	75.9748795092741\\
0.23	76.193953556302\\
0.24	76.3984226668614\\
0.25	76.5955893091865\\
0.26	76.8803855703228\\
0.27	77.1067620855849\\
0.28	77.3112311961443\\
0.29	77.5741200525778\\
0.3	77.9100335913539\\
0.31	78.1948298524901\\
0.32	78.4212063677523\\
0.33	78.6840952241858\\
0.34	78.8593544618081\\
0.35	79.0565211041332\\
0.36	79.2244778735213\\
0.37	79.3924346429093\\
0.38	79.5165766028918\\
0.39	79.7356506499197\\
0.4	79.8743975463707\\
0.41	80.0934715933986\\
0.42	80.2468234263181\\
0.43	80.327150576895\\
0.44	80.3855703227691\\
0.45	80.4439900686432\\
0.46	80.5973419015627\\
0.47	80.7141813933109\\
0.48	80.8237184168249\\
0.49	80.9113480356361\\
0.5	81.0135825909157\\
0.51	81.0793048050241\\
0.52	81.1523294873667\\
0.53	81.2326566379436\\
0.54	81.298378852052\\
0.55	81.3494961296918\\
0.56	81.4006134073317\\
0.57	81.4225208120345\\
0.58	81.4517306849715\\
0.59	81.4736380896743\\
0.6	81.4809405579086\\
0.61	81.5101504308456\\
0.62	81.5247553673142\\
0.63	81.5393603037827\\
0.64	81.5539652402512\\
0.65	81.5685701767197\\
0.66	81.5831751131883\\
0.67	81.5977800496568\\
0.68	81.6342923908281\\
0.69	81.6342923908281\\
0.7	81.6561997955309\\
0.71	81.6708047319994\\
0.72	81.6708047319994\\
0.73	81.6927121367022\\
0.74	81.714619541405\\
0.75	81.7365269461078\\
0.76	81.7365269461078\\
0.77	81.7365269461078\\
0.78	81.7511318825763\\
0.79	81.7511318825763\\
0.8	81.7730392872791\\
0.81	81.7803417555134\\
0.82	81.7949466919819\\
0.83	81.7949466919819\\
0.84	81.8168540966847\\
0.85	81.8168540966847\\
0.86	81.8168540966847\\
0.87	81.8460639696217\\
0.88	81.8460639696217\\
0.89	81.8460639696217\\
0.9	81.8460639696217\\
0.91	81.853366437856\\
0.92	81.853366437856\\
0.93	81.8606689060903\\
0.94	81.8606689060903\\
0.95	81.8606689060903\\
0.96	81.8606689060903\\
0.97	81.8606689060903\\
0.98	81.8606689060903\\
1	81.8606689060903\\
};
\addlegendentry{{\textcolor{black}{Windheuser~\cite{windheuser2011}}}}

\addplot [color=mycolor2, smooth, line width=\subplotlinewidth]
  table[row sep=crcr]{%
0	3.90257717360521\\
0.01	7.14245256301331\\
0.02	17.1622769753611\\
0.03	28.7567261399037\\
0.04	40.1585952987822\\
0.05	50.1387708864344\\
0.06	58.8275276125743\\
0.07	65.7490795808553\\
0.08	71.3848768054376\\
0.09	75.5083545737751\\
0.1	78.8785046728972\\
0.11	81.619937694704\\
0.12	83.7496459926366\\
0.13	86.0152931180969\\
0.14	87.714528462192\\
0.15	88.8870008496177\\
0.16	90.0594732370433\\
0.17	90.9430756159728\\
0.18	91.5434721042198\\
0.19	91.9796091758709\\
0.2	92.3930897762673\\
0.21	92.8292268479184\\
0.22	93.1634097989238\\
0.23	93.3899745114698\\
0.24	93.6108751062022\\
0.25	93.8544321721892\\
0.26	94.035683942226\\
0.27	94.1886151231946\\
0.28	94.3188898329085\\
0.29	94.4661568960634\\
0.3	94.6247521948457\\
0.31	94.6983857264231\\
0.32	94.9022939677145\\
0.33	95.0325686774285\\
0.34	95.2251486830926\\
0.35	95.3724157462475\\
0.36	95.5480033984707\\
0.37	95.7519116397621\\
0.38	95.8425375247805\\
0.39	96.0521098838856\\
0.4	96.2276975361087\\
0.41	96.3409798923818\\
0.42	96.4485981308411\\
0.43	96.6241857830643\\
0.44	96.7148116680827\\
0.45	96.8450863777967\\
0.46	96.9753610875106\\
0.47	97.0433305012744\\
0.48	97.1509487397338\\
0.49	97.3378646275843\\
0.5	97.5077881619938\\
0.51	97.5984140470122\\
0.52	97.666383460776\\
0.53	97.7230246389125\\
0.54	97.8986122911357\\
0.55	97.9495893514585\\
0.56	98.023222883036\\
0.57	98.0968564146134\\
0.58	98.1421693571226\\
0.59	98.2044746530728\\
0.6	98.2611158312093\\
0.61	98.3347493627867\\
0.62	98.380062305296\\
0.63	98.4480317190598\\
0.64	98.6236193712829\\
0.65	98.6576040781648\\
0.66	98.7255734919286\\
0.67	98.7878787878788\\
0.68	98.8558482016426\\
0.69	98.8898329085245\\
0.7	98.9578023222883\\
0.71	99.0654205607477\\
0.72	99.1447182101388\\
0.73	99.2240158595299\\
0.74	99.2693288020391\\
0.75	99.3259699801756\\
0.76	99.3712829226848\\
0.77	99.4222599830076\\
0.78	99.5015576323988\\
0.79	99.5242141036533\\
0.8	99.6035117530445\\
0.81	99.660152931181\\
0.82	99.6998017558765\\
0.83	99.7394505805721\\
0.84	99.8017558765222\\
0.85	99.8187482299632\\
0.86	99.8244123477768\\
0.87	99.869725290286\\
0.88	99.8810535259133\\
0.89	99.9376947040498\\
0.9	99.9830076465591\\
0.91	99.9886717643727\\
0.92	99.9886717643727\\
0.93	99.9943358821863\\
0.94	99.9943358821863\\
0.95	99.9943358821863\\
0.96	100\\
0.97	100\\
0.98	100\\
1	100\\
};
\addlegendentry{{\textcolor{black}{Ours}}}
\end{axis}

\end{tikzpicture}%

\end{tabular}\hspace{-0.4cm}
	     & \begin{tabular}{cc}%
	         \multicolumn{2}{c}{\includegraphics[height=1.5cm]{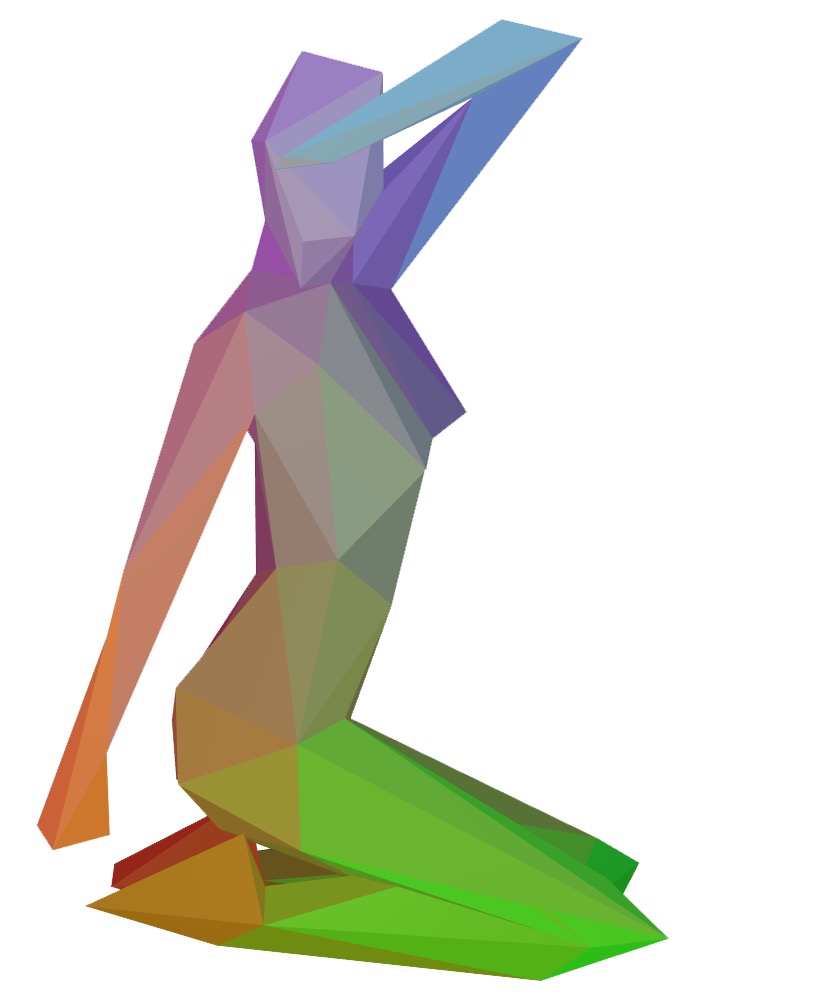}}\\
	          \multicolumn{2}{c}{Reference}\\
	          \includegraphics[height=1.55cm]{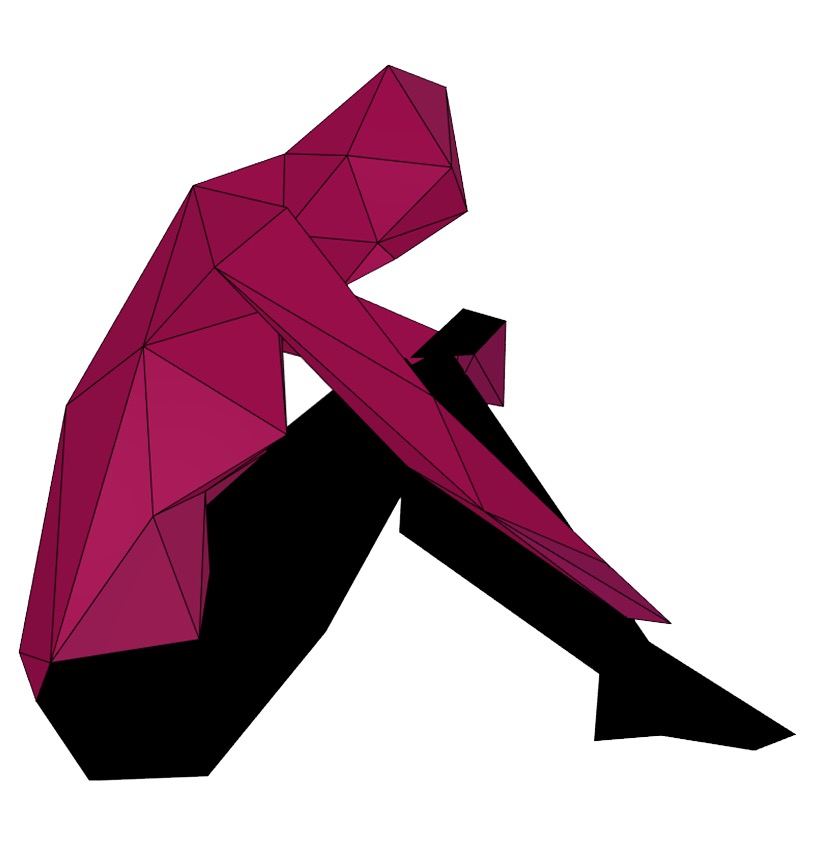}&%
	          \includegraphics[height=1.55cm]{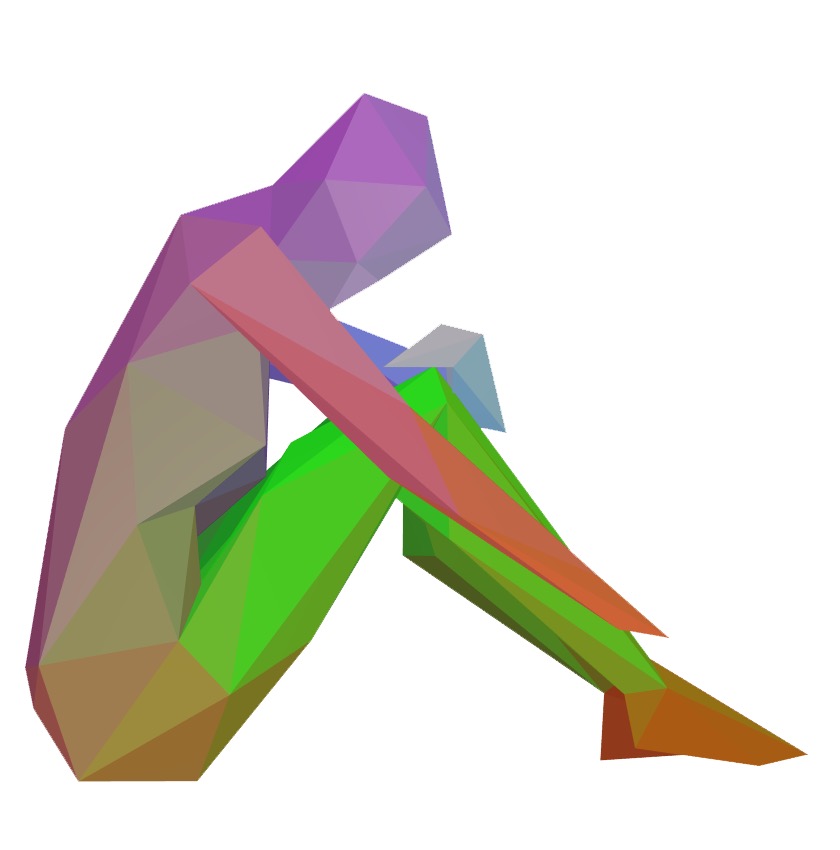}\\%
	           Windheuser & Ours
	     \end{tabular}
	\end{tabular}
		\caption{Comparison of the average percentage of correct matchings for the entire TOSCA dataset  of \textbf{Windheuser et al.~vs.~Ours} (left). The horizontal axis shows the geodesic error threshold, and the vertical axis shows the percentage of matches that are smaller than or equal to this error. 
		For Windheuser et al.~we allow the solver to take $10\times$ more time than our method needed (per shape matching instance) -- even then the curve of Windheuser et al.~is low because it is unable to find good matchings within the given time budget, see the qualitative example on the right (black shows unmatched parts, all shapes have $175$ triangles).}
		\label{fig:toscaoursvswindheuser}
	\end{center}
\end{figure}

\section{Experiments}
\label{sec:experiments}
In the following we experimentally evaluate our approach on various datasets in a range of different settings.

\paragraph{Shape matching data.} In our experiments we consider shape matching instances from several datasets: TOSCA~\cite{bronstein2008}, TOSCA partial~\cite{rodola2017}, SHREC-watertight~\cite{giorgi2007}, SMAL~\cite{zuffi2017}, SHREC~'19~\cite{melzi2019shrec} and KIDS~\cite{rodola2014}.
We downsample all meshes to about at most $1000$ faces.
We do not perform post-processing on the obtained matchings.
The energy $\mathbb{E}$ of problem~\eqref{eq:opt-prob-discrete} is computed analogously to~\cite{windheuser2011a}.

\paragraph{Shape matching algorithms.}
Since our main objective is to improve the computational performance of the best existing solver for problem~\eqref{eq:opt-prob-discrete}, as a baseline we reimplemented the rounding strategy 
proposed by Windheuser et al.~\cite{windheuser2011}\footnote{The original code is not available.} based on the state-of-the-art LP solver Gurobi~\cite{gurobi}. For further details we refer to the Appendix.

In addition, we also compare our solver for problem~\eqref{eq:opt-prob-discrete} with two recent
state-of-the-art methods that rely on other shape matching formalisms. Among them is a method  based on smooth shells (Eisenberger et al.)~\cite{eisenberger2020}, and a method based on a discrete functional map optimization framework (Ren et al.)~\cite{ren2021}.

\subsection{Combinatorial Solvers for Problem~\eqref{eq:opt-prob-discrete}}

First, we compare against the directly related approach of Windheuser et al.~\cite{windheuser2011}, which solves the same problem~\eqref{eq:opt-prob-discrete} as ours. 

In Fig.~\ref{fig:teaser} (right) we show the scalability of the solver of  Windheuser et al.~and ours depending on the number of triangles per shape. We find that while Windheuser et al.~already takes 1\, h for low-resolution shapes with  $\approx200$ triangles (leading to a total of $\approx\,8 \cdot 10^{5}$ binary variables in problem~\eqref{eq:opt-prob-discrete}),
our method scales significantly better and can handle shapes with substantially higher resolutions.
Our method has a linear memory consumption (to the problem size, which is quadratic in the shape resolution).
We note that the bump in the graph stems from the heuristically determined recomputation of the min-marginals which may vary for individual matching problems (see Sec. A3 in Appendix).

In Fig.~\ref{fig:toscaoursvswindheuser} we show quantitative and qualitative results of both solvers on the full TOSCA dataset, where we have found that our method performs significantly better with an average area  under the curve (higher is better) of $0.91$ vs.~$0.72$ for Windheuser et al. (see Appendix for more details).

\newcommand{\partialheight}[0]{2.78cm}
\begin{figure*}[ht!]
 \centerline{%
  \footnotesize%
\setlength\tabcolsep{0pt}
  \begin{tabular}{cccccc}%
          \includegraphics[height=\partialheight]{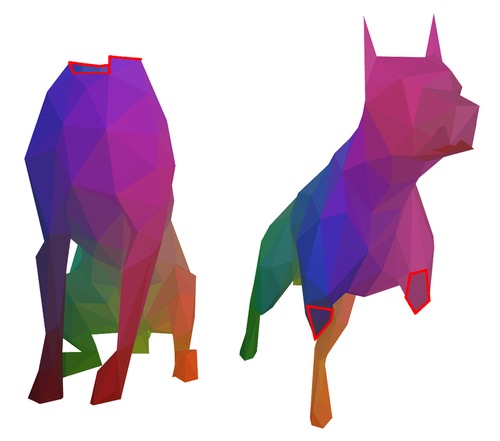}%
          &%
          \includegraphics[height=\partialheight]{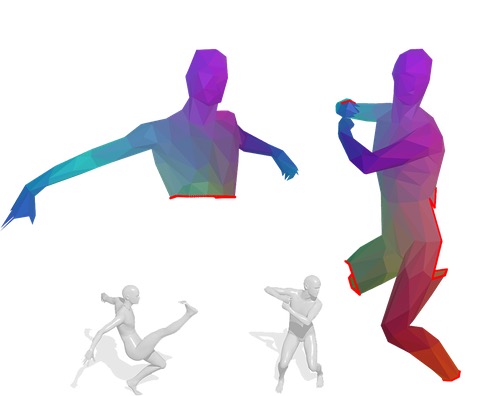}%
          &%
          \includegraphics[height=\partialheight]{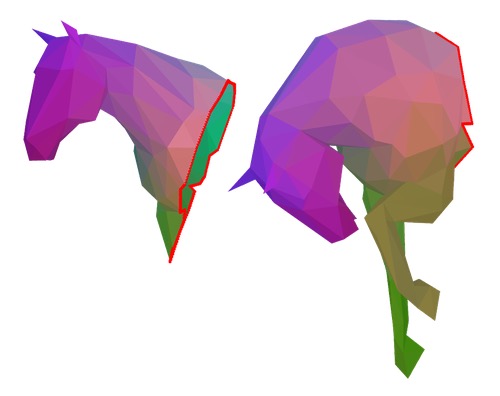}%
          &%
          \includegraphics[height=\partialheight]{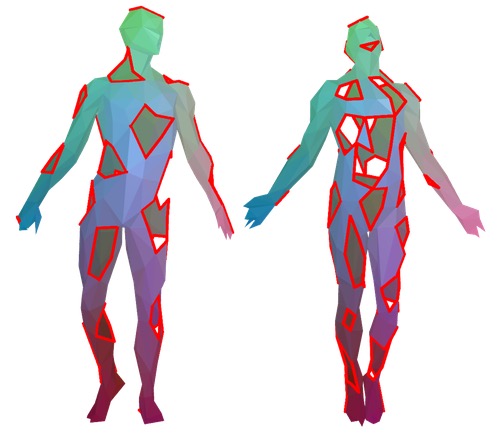}%
          &%
          \includegraphics[height=\partialheight]{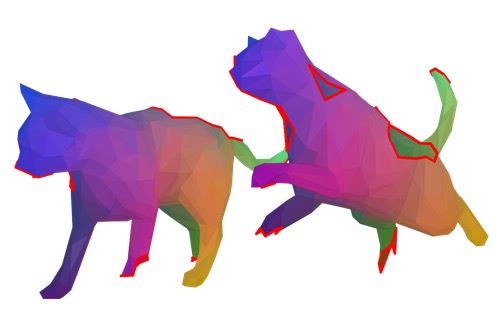}%
        \end{tabular}%
  }%
\caption{Our method can handle the difficult case of \textbf{matching pairs of partial shapes without availability of complete shapes}, in which the orientation-preserving diffeomorphism serves as powerful constraint (boundaries are shown as red lines).
(Best viewed magnified on screen)
}
\label{fig:partial}
\end{figure*}

\begin{figure}[h!]
	\begin{center}
	    \input{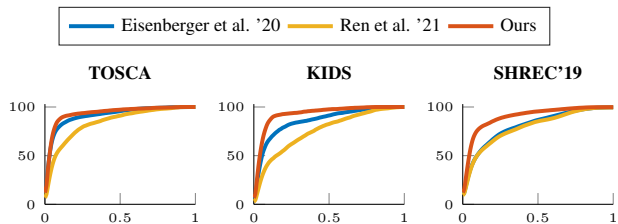}
		\caption{Comparison of the percentage of correct matchings for the \textbf{TOSCA dataset} (left), the \textbf{KIDS dataset} (middle) and the \textbf{SHREC'19 dataset} (right). 
		The horizontal axis shows the geodesic error threshold, and the vertical axis shows the percentage of matches smaller than or equal to this error. 
		Ours obtains the highest area under curve ($\uparrow$)  ($\textbf{0.929}$ for ours vs.~$0.836$ for Ren et al.~and $0.910$ for Eisenberger et al.~on TOSCA).}
		\label{fig:matching-accuracy-vs-sota}
	\end{center}
\end{figure}

\subsection{Comparison to State of the Art}
Next, we compare our method to recent state-of-the-art shape matching approaches. Since Eisenberger et al.~\cite{eisenberger2020} show that their recent method substantially outperforms a range of other methods (BCICP \cite{ren2019}, Zoomout \cite{melzi2019}, KM \cite{vestner2017a}, FM \cite{ovsjanikov}, BIM \cite{kim2011}) on various datasets, for our comparison we only focus on the method by Eisenberger et al.~\cite{eisenberger2020}, and in addition also include results of the more recent method of Ren et al.~\cite{ren2021}.

In Fig.~\ref{fig:against-sota-qualitative} we show qualitative results for various non-rigid 3D shape matching instances from the datasets TOSCA, SHREC-watertight, SMAL and KIDS. The method of Ren et al.~suffers from matchings that are not geometrically consistent, which thus leads to nonsmooth matchings (\eg the pliers in the sixth column, or the kid in the eigth column). Moreover, both Ren et al.~and Eisenberger et al.~do not guarantee orientation preservation, so that for example left-right flips occur (\eg the lion in the fifth column for Eisenberger et al., or the cat in the second-last column for Ren et al.). In contrast, our method obtains reliable matchings in these cases which are smooth and preserve the orientation.

In Fig.~\ref{fig:matching-accuracy-vs-sota} we show quantitative results on the TOSCA, KIDS and SHREC'19 datasets.
Despite the fact that our method aims to directly solve a high-dimensional ILP with up to $2\cdot10^7$ binary variables (see problem~\eqref{eq:opt-prob-discrete}), our method outperforms the baselines in terms of solution quality. Due to the large number of binary variables, our approach requires $22.68$ min on average to compute a matching (opposed to few seconds for Ren et al.~and Eisenberger et al.).
For some individual shape classes we have found that ours has slightly worse performance (see Appendix), which stems from  poor local optima for individual shape matching instances.

\subsection{Partial-to-Partial Non-Rigid Matching}
In Fig.~\ref{fig:partial} we showcase that our proposed solver is even able to handle difficult non-rigid partial-to-partial shape matching problems. Although partial shape matching has attracted a lot of attention recently~\cite{litany2016non,rodola2017,litany2017,vestner2017a,melzi2019,bernard2020,xiang2021}, most existing works typically consider the case of matching a partial shape to a complete shape. There are also some partial-partial shape matching methods \cite{comso2016,attaiki2021dpfm} that build upon machine learning, which we consider to be
orthogonal to our method. 
In contrast, for the first time we utilize orientation-preserving diffeomorphisms to constrain the challenging problem of non-rigidly matching a pair of partial shapes {without availability of complete shapes}.

\section{Discussion \& Limitations}
Our experimental analysis has confirmed that we can compute matchings involving up to $2 \cdot 10^7$ variables on a range of datasets within about $2$h.
Although our solver does not guarantee to find globally optimal solutions and thus may lead to suboptimal results in some cases (see Appendix), our experiments confirm that in most cases we produce high-quality matchings, and that we can even handle partial-to-partial shape matching. 

\section{Conclusion}
We proposed a novel combinatorial solver for efficiently computing solutions to the mathematically elegant integer linear programming formulation of 3D shape matching introduced in~\cite{windheuser2011}. Our solver consists of a primal-dual approach where the primal step makes use of min-marginals computed globally for the full problem. The original solver of \cite{windheuser2011} could only handle shapes of up to 250 triangles and therefore had to be applied in a heuristic coarse-to-fine strategy. In contrast, the proposed method leads to a drastic speedup and can therefore handle ILPs with millions of variables and 3D shapes of practically relevant resolution.
This allows us to globally optimize over all orientation preserving geometrically consistent non-rigid mappings providing state-of-the-art matching results.  Moreover, we extend this combinatorial matching approach to challenging scenarios like partial-to-partial shape matching without availability of the complete shapes.
We argue that also generalizations, e.g.\ to more than two shapes, should be practically solvable through a similar combinatorial optimization approach.

{\small
\bibliographystyle{ieee_fullname}
\bibliography{main}
}

\clearpage
\appendix

\section{Product Triangles to Explore}
In the following we explain the neighborhood between product triangles.
For the $f$-th product triangle $(a_1b_1, a_2b_2, a_3b_3)$ 
the set of neighbors $F_{\mathcal{N}(f)}$ is defined as
\begin{equation*}
\begin{aligned}
   &F_{\mathcal{N}(f)} =\\
   &\left\{
   \left(\begin{array}{c}
   a'_1 b'_1\\ a'_2 b'_2\\ a_3' b_3'
   \end{array}\right) 
   \in F :
   \begin{array}{c}
   (a_1' b_1' = a_2 b_2 \wedge a'_2 b'_2 = a_1 b_1) \vee \\
   (a_1' b_1' = a_3 b_3 \wedge a'_2 b'_2 = a_2 b_2) \vee \\
   (a_1' b_1' = a_1 b_1 \wedge a'_2 b'_2 = a_3 b_3) \vee \\
   (a_2' b_2' = a_2 b_2 \wedge a'_3 b'_3 = a_1 b_1) \vee \\
   (a_2' b_2' = a_3 b_3 \wedge a'_3 b'_3 = a_2 b_2) \vee \\
   (a_2' b_2' = a_1 b_1 \wedge a'_3 b'_3 = a_3 b_3) \vee \\
   (a_3' b_3' = a_2 b_2 \wedge a'_1 b'_1 = a_1 b_1) \vee \\
   (a_3' b_3' = a_3 b_3 \wedge a'_1 b'_1 = a_2 b_2) \vee \\
   (a_3' b_3' = a_1 b_1 \wedge a'_1 b'_1 = a_3 b_3) \phantom{\vee}
   \end{array}
   \right\}.
   \end{aligned}
\end{equation*}
In words, every product triangle which shares an opposite oriented edge with the $f$-th product triangle is neighboring to the $f$-th product triangle.
The union of all sets $F_{\mathcal{N}(f)}$ yields the set of exploration candidates 
\begin{equation}
    F_\text{expl} = \bigcup_{f \text{ part of solution}} F_{\mathcal{N}(f)}.
\end{equation}
When searching for new matchings with our primal heuristic, we only iterate over the product triangles in $F_\text{expl}$.
If none of the product triangles in $F_\text{expl}$ is feasible, the current partial solution cannot be rounded to a feasible solution and previously added matchings are removed.

\section{Recomputation of Min-Marginals}
In order to obtain a rounded primal solution we repeatedly recompute the min-marginals after a certain number of calls of the primal heuristic.
To this end, we make a trade-off between computation time and quality of min-marginals and achievable solutions.
For $k$ being the total number of calls of the primal heuristic, we compute the threshold $\alpha = 0.2 \cdot \min(\abs{F_X}, \abs{F_Y})$, and whenever we have added $k{\cdot}\alpha$ product triangles to the solution, we re-compute the min-marginals.
For that, we first fix variables in~\eqref{eq:opt-prob-discrete}, then dualize \eqref{eq:opt-prob-discrete}, and eventually solve the dual problem again.

\section{Comparison to Windheuser et al.}
\paragraph{Reimplementation of Windheuser et al.'s approach.}
Windheuser et al.~tackle the ILP formulation~\eqref{eq:opt-prob-discrete}  through an LP-relaxation, for which variables are gradually rounded to binary values and then kept fixed.
This process is repeated until all constraints are fulfilled. The LP-relaxation reads
\begin{equation} 
    \underset{\Gamma \in [0, 1]^{|F|}}{\text{min}} \mathbbm{E}^\top  \Gamma  
    ~~  \text{s.t.} ~~
    \begin{pmatrix}
      \pi_X \\ \pi_Y \\ \partial
    \end{pmatrix}
    \Gamma
    = 
    \begin{pmatrix}
       \boldsymbol{1}_{|F_X|} \\ \boldsymbol{1}_{|F_Y|} \\ \boldsymbol{0}_{\abs{E}} \\
    \end{pmatrix}.
\label{eq:lp-relaxation}
\end{equation}
In Algorithm~\ref{algo:windheuser-reimplementation} we sketch our re-implementation of the approach by Windheuser et al., which follows the procedure explained in \cite{windheuser2011}. For solving the LP-relaxation we use the state-of-the-art LP-solver Gurobi \cite{gurobi}.

\begin{algorithm}[h]
	\caption{Solving \eqref{eq:opt-prob-discrete} according to Windheuser et al.}\label{algo:windheuser-reimplementation}
	\SetAlgoLined
	\KwIn{\eqref{eq:lp-relaxation}}
	\KwOut{Solution $\Gamma \in \{0, 1\}^{|F|}$}
	\While{Constraints not fulfilled}{
	    Solve LP~\ref{eq:lp-relaxation} (while keeping already set elements of $\Gamma$ fixed)\;
	    \If{$\Gamma_i > 0.5$}{
	        Fix $\Gamma_i = 1$\;
	    }
	}
\end{algorithm}

\paragraph{Test setup.}
In Fig.~\ref{fig:toscaoursvswindheuser} of the main paper we quantitatively compare our solver to the  approach by Windheuser et al. We complement these results with Fig.~\ref{fig:toscaoursvswindheuser-full}, where we show the corresponding curves for the individual classes. In this experiment for each shape matching instance we first run our method, and afterwards run the approach  by Windheuser et al.~with a fixed time budget. This is implemented by tracking the total time of Algorithm~\ref{algo:windheuser-reimplementation} (including within the LP-solver itself),
and once the limit is reached the algorithm is terminated and the current (possibly partial) solution is used as matching.

In all experiments we allow the method of Windheuser et al.~to use $10 \times$ more time than ours -- even with this generous time budget the method by Windheuser et al.~is often not able to produce a complete matching.

\begin{figure}[h!]
	\begin{center}
		\input{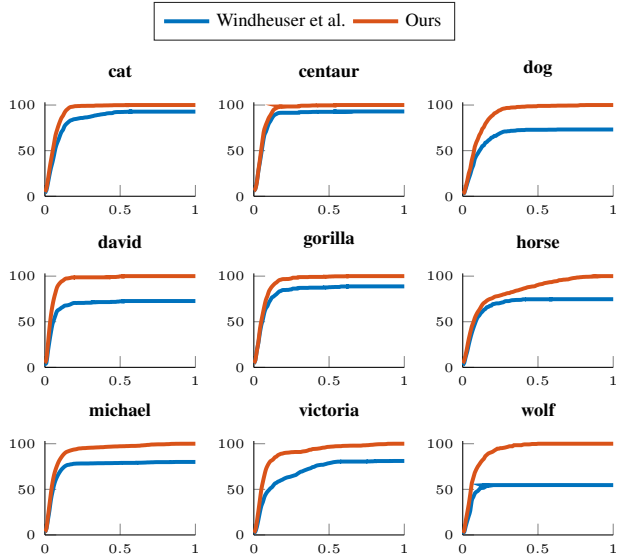} 
		\caption{Comparison of the percentage of correct matchings for different shape classes of the TOSCA dataset. The horizontal axis shows the geodesic error threshold, and the vertical axis shows the percentage of matches that are smaller than or equal to this error. We reduce all shapes to $175$ triangles. For Windheuser et al.~we allow the solver to take $10\times$ more time than our method needed. The curves by Windheuser et al. are lower because often it only finds only few matchings within the given time budget.}
		\label{fig:toscaoursvswindheuser-full}. 
	\end{center}
\end{figure}

\section{Partial Shape Matching}
While the original formalism~\eqref{eq:opt-prob-discrete} assumes that the shapes do not have a boundary, we propose a simple yet effective way of dealing with partial shapes.
To this end, we simply close existing holes with triangular patches and then compute our energy for each product triangle that involves a `hole-triangle'.
In Fig.~\ref{fig:partial-wolf-holes} we show an example of shapes with closed holes and their corresponding shapes with holes.
We note that in all other visualizations we do not show the closed holes but highlight all shape boundaries.
\begin{figure}[H]
\begin{center}
    \includegraphics[width=0.6\columnwidth]{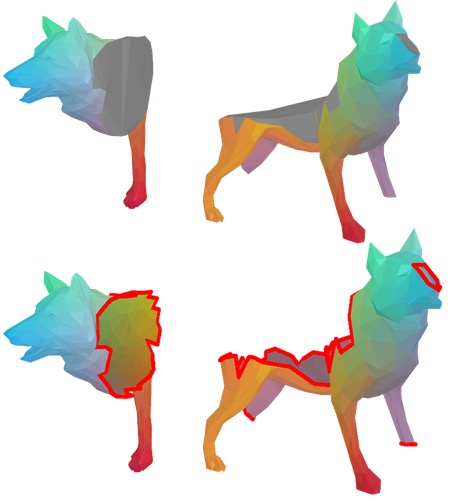}
    \caption{Matching of partial wolf shapes with our approach. At the top, we additionally show the closed shapes (hole-closing triangles are plotted in gray).}
    \label{fig:partial-wolf-holes}
\end{center}
\end{figure}

\section{Additional Results}
\paragraph{Additional qualitative results.}
In Fig.~\ref{fig:against-sota-qualitative-tosca} we show additional qualitative matchings of the method by Eisenberger et al.~\cite{eisenberger2020}, Ren et al.~\cite{ren2021} and ours. The experimental setting corresponds to  Fig.~\ref{fig:against-sota-qualitative} of the main paper. 
\begin{figure*}[ht!]
	\begin{center}
		\includegraphics[width=\linewidth]{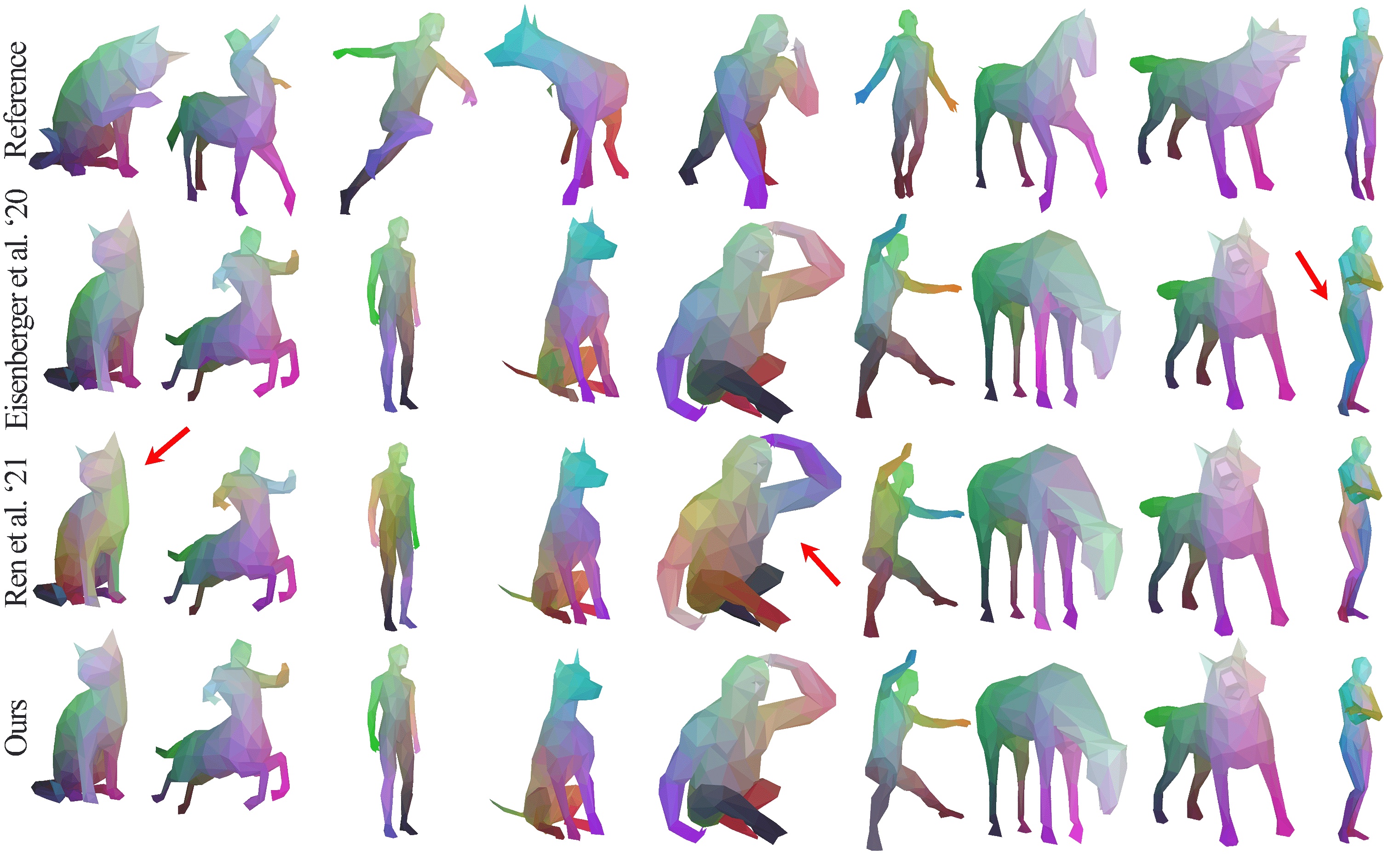}
		\caption{\textbf{Qualitative comparison} of the method by Eisenberger et al.~\cite{eisenberger2020} (second row), Ren et al.~\cite{ren2021} (third row) and Ours (last row) on the TOSCA dataset.
		}
		\label{fig:against-sota-qualitative-tosca}
	\end{center}
\end{figure*}

\paragraph{Additional quantitative results.}
In Fig.~\ref{fig:matching-accuracy-detailed} we show error curves for individual shape classes of TOSCA and KIDS dataset.

\paragraph{Shape resolution and discretization.} 
A strong advantage of the utilized discrete matching model is that by allowing for degenerate matchings (triangle-vertex and triangle-edge matchings) it can handle different shape resolutions and discretizations. In Fig.~\ref{fig:dummy-discret} we show this for shapes with different discretisation (left), as well as for substantially varying resolution (right, factor of ${\approx} 3\times$ more triangles).
    \newcommand{\dummyheight}[0]{1.3cm}
\begin{figure}[H]
	\begin{center}
	\footnotesize
	\begin{tabular}{cc}
	\setlength\tabcolsep{0pt}
	\includegraphics[height=\dummyheight]{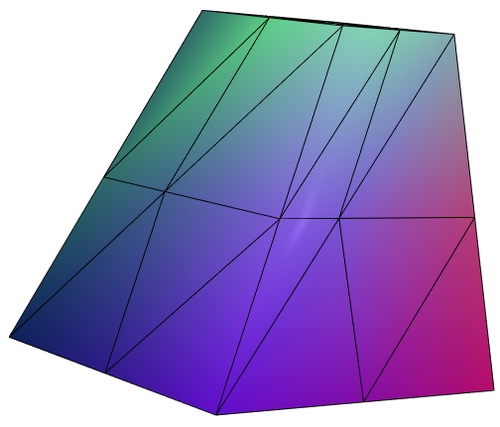}\includegraphics[height=\dummyheight]{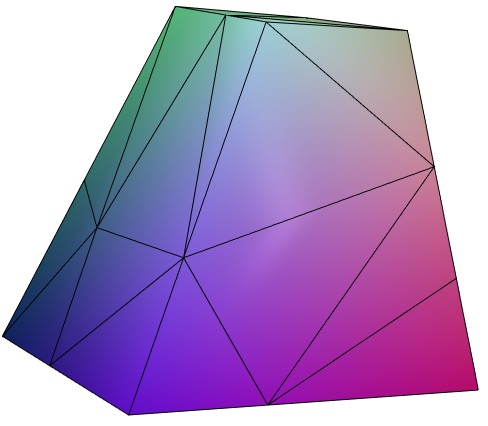} ~~~~&~~~~
    \includegraphics[height=\dummyheight]{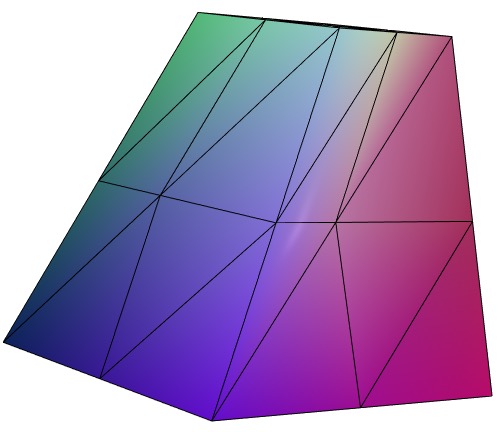}\includegraphics[height=\dummyheight]{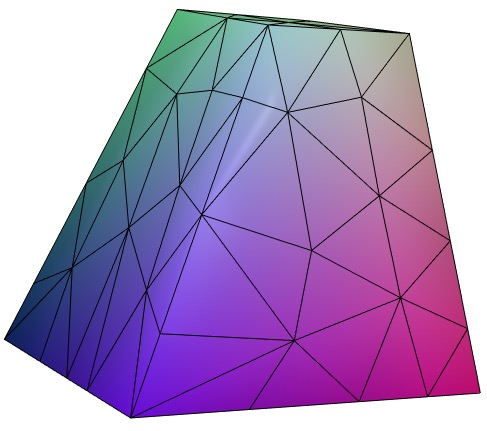}
	\end{tabular}
	\end{center}
	\caption{Matching shapes with different discretisation for two shape pairs.}
    \label{fig:dummy-discret}
\end{figure}
In Fig.~\ref{fig:partial-discret} we show additional partial-to-partial results with shapes of different mesh resolution (factor of ${\approx}\sqrt{2}$) for different levels of partiality.
\newcommand{\dummypartialheight}[0]{1.1cm}
\begin{figure}[H]
	\begin{center}
	\footnotesize
	\begin{tabular}{ccccc}
	\setlength\tabcolsep{0pt}
	Reference & 80\% & 70\% & 60\% & 50\%\\
    \includegraphics[height=\dummypartialheight]{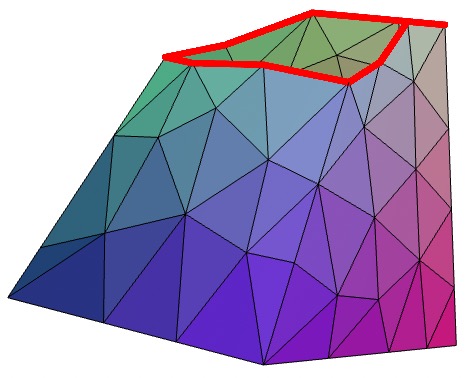} &
    \includegraphics[height=\dummypartialheight]{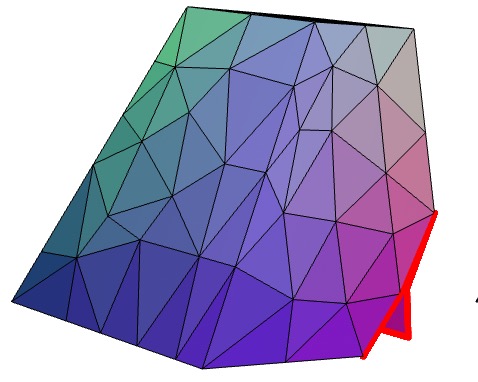} &
    \includegraphics[height=\dummypartialheight]{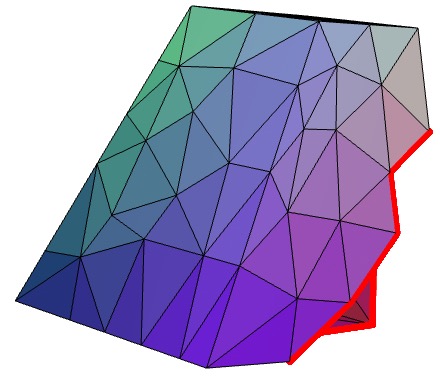} &
    \includegraphics[height=\dummypartialheight]{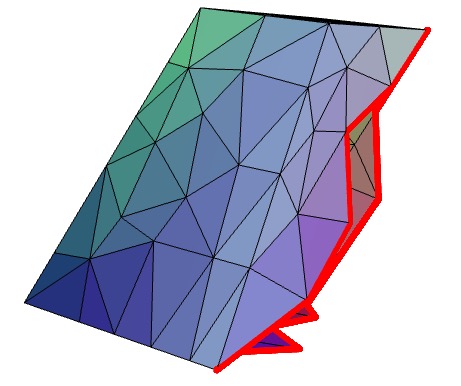} &
    \includegraphics[height=\dummypartialheight]{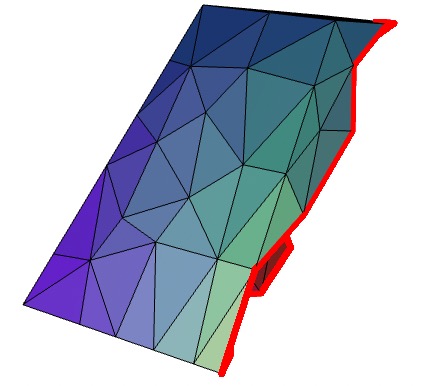} 
	\end{tabular}
	\end{center}
	\caption{Matching shapes with different levels of partiality with different mesh resolution. At the partiality level of around 50~\% of the original shape no plausible matching can be found anymore.}
	\label{fig:partial-discret}
\end{figure}

\paragraph{Non-isometries.}
The discrete diffeomorphism implemented by the \emph{constraints} in (ILP-SM) can also handle non-isometries, which we show in Fig.~\ref{fig:shrec20} on the SHREC'20 dataset~\cite{dyke2020track}.
\begin{figure}[H]
	\begin{center}
	\footnotesize
	\begin{tabular}{ccc}
	\setlength\tabcolsep{0pt}
	\includegraphics[height=2.2cm]{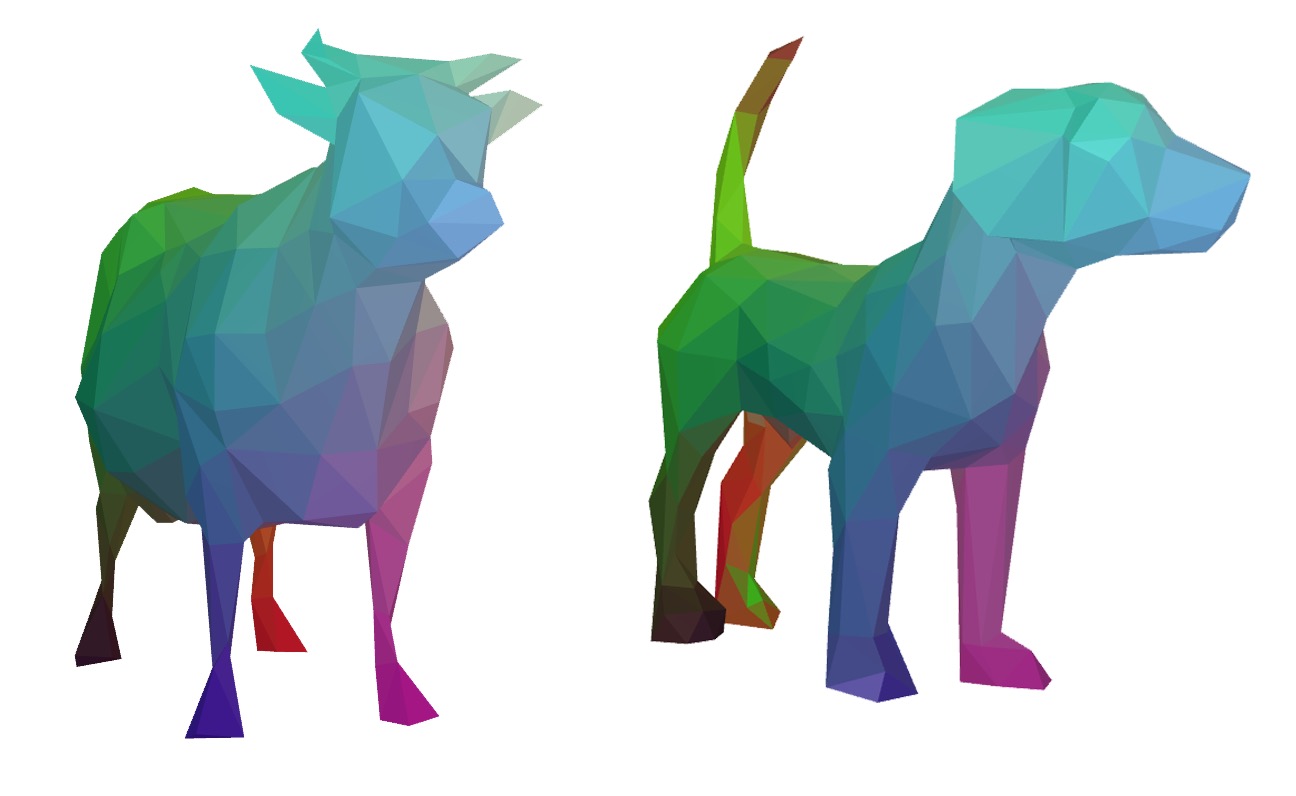} ~~&~~
    \includegraphics[height=2.2cm]{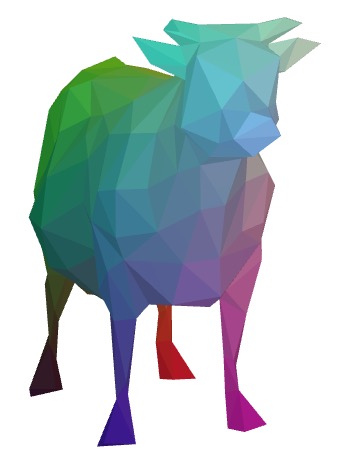}&\includegraphics[height=2.4cm]{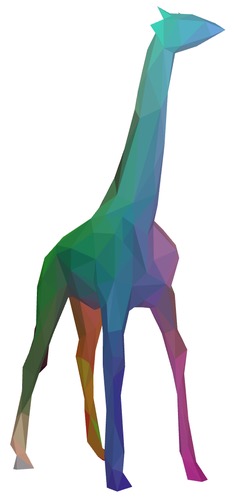}
	\end{tabular}
 	\vspace{-4mm}
	\end{center}
	\caption{Matching two different non-isometric shape pairs of the SHREC'20 dataset.}
	\label{fig:shrec20}
\end{figure}

\begin{figure}[h!]
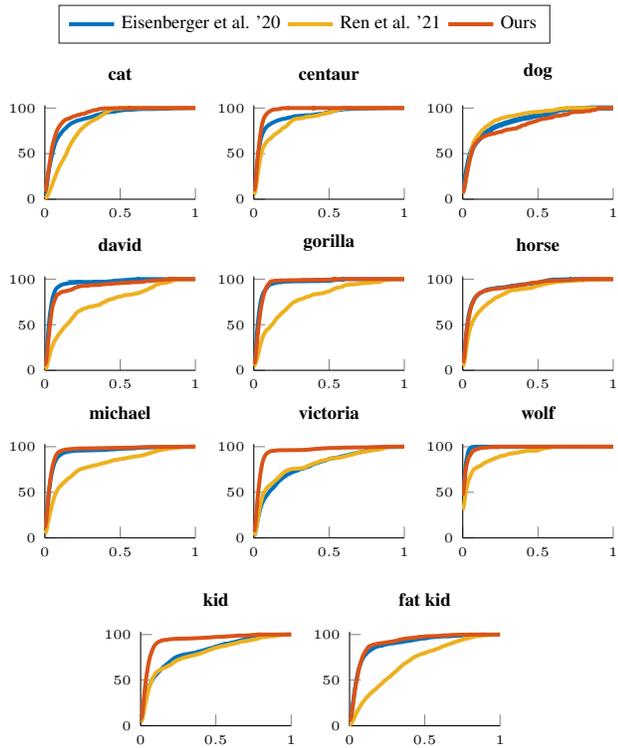

	\begin{center}
		\input{figures/pckSOTAvsOurs}
	    \input{figures/pckKids}
		\caption{PCK curves for individual shape classes of \textbf{TOSCA dataset} (first three rows) and \textbf{KIDS dataset} (last row).
		The horizontal axis shows the geodesic error threshold, and the vertical axis shows the percentage of matches smaller than or equal to this error.}
		\label{fig:matching-accuracy-detailed}
	\end{center}
\end{figure}

\paragraph{Texture transfer.}
In Fig.~\ref{fig:texturetransfer} we illustrate that the matchings computed with our method can be used for texture transfer.
\begin{figure}[h!]
	\begin{center}
	\footnotesize
	\begin{tabular}{cc}
	\setlength\tabcolsep{0pt}
	\includegraphics[height=1.9cm]{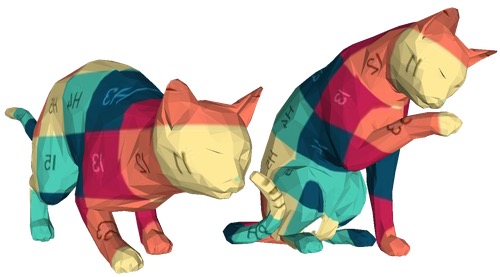} ~~&~~
    \includegraphics[height=2.4cm]{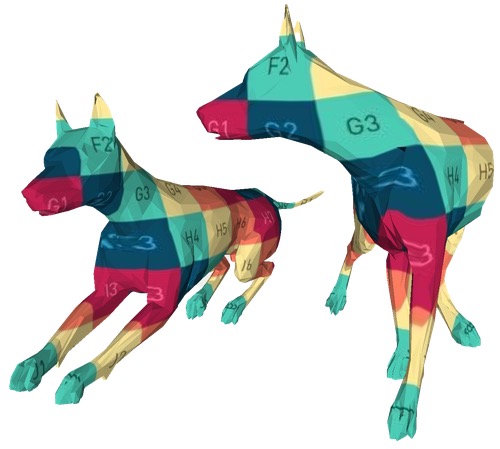}
	\end{tabular}
	\end{center}
	\caption{Texture transfer based on correspondences computed with our method.}
	\label{fig:texturetransfer}
\end{figure}

\paragraph{Error cases.}
In Fig.~\ref{fig:fails} we show some failure modes of our method. For the partial-partial dog shown in Fig.~\ref{fig:partial-dog-fail} ours was correctly initialized (correct matching head), but was not able to determine the remaining part of matching appropriately. This could possibly be accounted for by considering a tighter relaxation. For the partial-partial matching of the  `Victoria' shape in Fig.~\ref{fig:partial-vici-fail}, the overlapping areas of both shapes are too small, so that finding a proper matching is extremely challenging. The matching of the dog in Fig.~\ref{fig:dog-fail} failed due to a wrong initialization. This may happen if the total min-marginals do not clearly indicate which matching of initial matchings are best suited.
As illustrated by these examples, there are some cases in which our method may fail. Nevertheless, as the quantitative experiments in the main paper indicate, overall our obtained matchings improve upon several existing state-of-the-art methods for non-rigid shape matching.
\newcommand{\failureheight}[0]{5cm}
\begin{figure}[h!]
    \begin{subfigure}[b]{0.3\columnwidth}
    \centering
      \includegraphics[height=\failureheight]{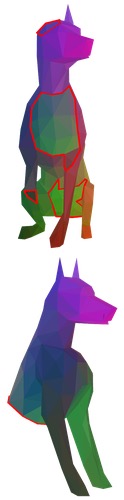}
      \caption{}
      \label{fig:partial-dog-fail}
    \end{subfigure}
    \hfill
    \begin{subfigure}[b]{0.3\columnwidth}
    \centering
      \includegraphics[height=\failureheight]{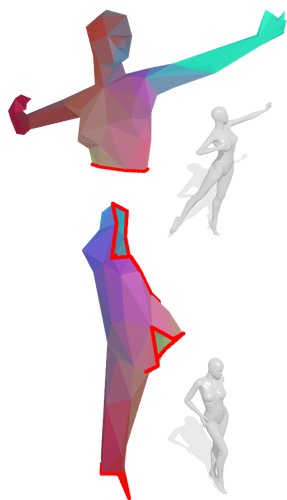}
      \caption{}
      \label{fig:partial-vici-fail}
    \end{subfigure}
    \hfill
    \begin{subfigure}[b]{0.3\columnwidth}
    \centering
      \includegraphics[height=\failureheight]{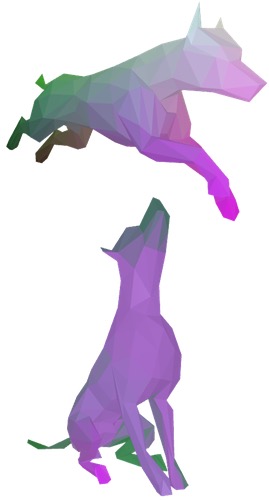}
      \caption{}
      \label{fig:dog-fail}
    \end{subfigure}
    \caption{Some failure modes of our approach.
    }
    \label{fig:fails}
\end{figure}

\end{document}